\documentclass[letterpaper]{article} 
\usepackage{aaai2026}  
\usepackage{times}  
\usepackage{helvet}  
\usepackage{courier}  
\usepackage[hyphens]{url}  
\usepackage{graphicx} 
\urlstyle{rm} 
\usepackage{natbib}  
\usepackage{caption} 
\frenchspacing  
\setlength{\pdfpagewidth}{8.5in}  
\setlength{\pdfpageheight}{11in}  
%
\usepackage{algorithm}
\usepackage{algorithmic}

%
\usepackage{newfloat}
\usepackage{listings}
\DeclareCaptionStyle{ruled}{labelfont=normalfont,labelsep=colon,strut=off} 
\lstset{%
	basicstyle={\footnotesize\ttfamily},
	numbers=left,numberstyle=\footnotesize,xleftmargin=2em,
	aboveskip=0pt,belowskip=0pt,%
	showstringspaces=false,tabsize=2,breaklines=true}
\floatstyle{ruled}
\newfloat{listing}{tb}{lst}{}
\floatname{listing}{Listing}
%
\pdfinfo{
/TemplateVersion (2026.1)
}

\usepackage{amsmath}
\usepackage{amsthm}
\usepackage{amssymb}
\usepackage{makecell}
\usepackage{xcolor}
\usepackage{booktabs}
\usepackage{array}
\usepackage{multirow}
\usepackage{dsfont}
\usepackage{tocloft}

\newtheorem{definition}{Definition}
\newtheorem{proposition}{Proposition}

\setcounter{secnumdepth}{2} 

%


\title{Copyright Infringement Detection in Text-to-Image Diffusion Models via Differential Privacy}
\author{
    Xiafeng Man\textsuperscript{\rm 1},
    Zhipeng Wei\textsuperscript{\rm 3,\rm 4},
    Jingjing Chen\textsuperscript{\rm 2}\thanks{Corresponding author.}
}
\affiliations{
    \textsuperscript{\rm 1}College of Future Information Technology, Fudan University, Shanghai, China\\
    \textsuperscript{\rm 2}Institute of Trustworthy Embodied AI, Fudan University, Shanghai, China\\
    \textsuperscript{\rm 3}International Computer Science Institute, CA, USA\\
    \textsuperscript{\rm 4}UC Berkeley, CA, USA\\


    xfmanacad@outlook.com, zwei@icsi.berkeley.edu, chenjingjing@fudan.edu.cn
}


\begin{document}

\maketitle

\begin{abstract}
	The widespread deployment of large vision models such as Stable Diffusion raises significant legal and ethical concerns, as these models can memorize and reproduce copyrighted content without authorization. Existing detection approaches often lack robustness and fail to provide rigorous theoretical underpinnings. To address these gaps, we formalize the concept of copyright infringement and its detection from the perspective of Differential Privacy (DP), and introduce the conditional sensitivity metric, a concept analogous to sensitivity in DP, that quantifies the deviation in a diffusion model's output caused by the inclusion or exclusion of a specific training data point. To operationalize this metric, we propose \textbf{D-Plus-Minus (DPM)}, a novel post-hoc detection framework that identifies copyright infringement in text-to-image diffusion models. Specifically, DPM simulates inclusion and exclusion processes by fine-tuning models in two opposing directions: learning or unlearning.	Besides, to disentangle concept-specific influence from the global parameter shifts induced by fine-tuning, DPM computes confidence scores over orthogonal prompt distributions using statistical metrics. Moreover, to facilitate standardized benchmarking, we also construct the \textbf{C}opyright \textbf{I}nfringement \textbf{D}etection \textbf{D}ataset (CIDD), a comprehensive resource for evaluating detection across diverse categories. Our results demonstrate that DPM reliably detects infringement content without requiring access to the original training dataset or text prompts, offering an interpretable and practical solution for safeguarding intellectual property in the era of generative AI.
\end{abstract}

\begin{links}
    \link{Project Page}{https://leo-xfm.github.io/pubs/dpm}
\end{links}

\section{Introduction}
	\begin{figure*}[!t]
		\centering
		\includegraphics[width=0.98\textwidth]{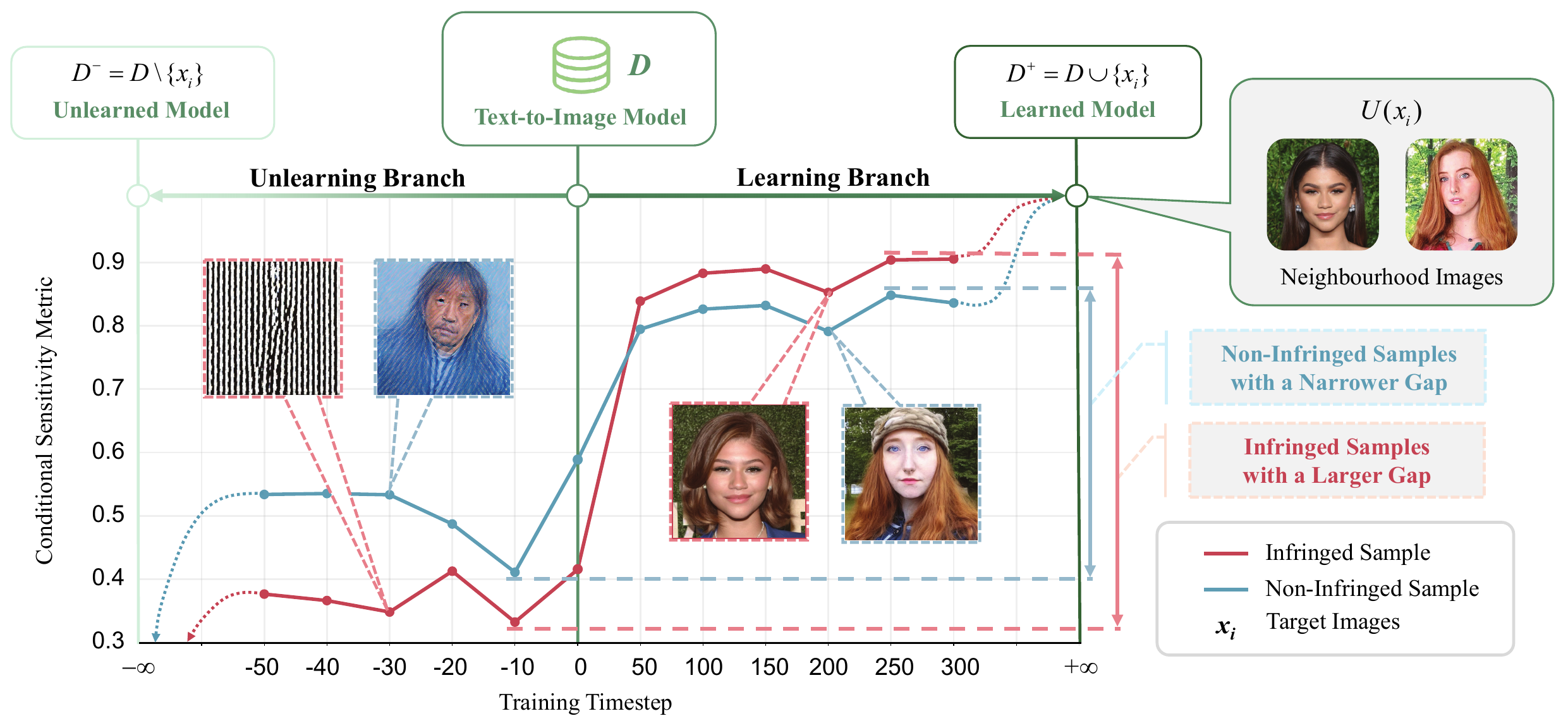} 
		\captionof{figure}{
			D-Plus-Minus Method. Given the neighbourhood images $U(x_i)$, i.e., several images of similar semantics extracted from the target image, of the target image $x_i$ as the training subset, we fine-tune the text-to-image model $G$ towards two branch: learning branch $G_{D^+}$ and unlearning branch $G_{D^-}$. Experimental results show that infringed samples lead to a significant shift in sensitivity metric, whereas non-infringed samples only cause minor changes.
		}
		\label{fig: DPM method}
	\end{figure*}

Recent advances in large vision models have improved the realism of image synthesis. While they are widely adopted in creative industries and public platforms, they have also raised serious concerns over copyright infringement. 
Researchers~\cite{somepalli2023diffusion, cilloni2023privacythreatsstablediffusion, carlini2023extracting} find that models such as Stable Diffusion~\cite{rombach2022high} may memorize and reproduce contents in training datasets, including those with unclear permission or copyright violation.
These risks are exacerbated by the lack of transparency regarding the provenance of training data in most models. As a result, there is an urgent need for accurate and reliable post-hoc methods to detect potential copyright infringements in models.

Recent studies have attempted to solve this issue. For example, CopyScope~\cite{zhou2023copyscopemodellevelcopyrightinfringement} proposes a model-level framework to quantify each component's contribution to potential copyright infringement in diffusion workflows, by evaluating FID-based similarity and Shapley value of attributing responsibility. However, it does not identify specific infringed concepts or samples, and therefore cannot provide concrete legal evidence of infringement.
Other works~\cite{wang2024evaluatingmitigatingipinfringement, xu2025largevisionlanguagemodelsdetect} detect infringement via prompt queries or prompt engineering work. 
However, prompt-based detection is inherently fragile, as it depends on constructing specific conditioning prompts to trigger infringing outputs, which can be easily affected by minor changes, including model updates and sampling randomness. 
This limits generalization across models and datasets. It also lacks interpretability, as the underlying causes of infringement remain unclear, and reproducibility, due to stochastic generation and non-deterministic model behavior.

To this end, we propose a novel perspective on copyright infringement, grounded in the theory of differential privacy. 
We reinterpret the detection of copyright infringement as the compliance with or violation of conditional differential publicity. Specifically, when a particular concept, such as the neighborhood images of a target image, is present or absent in the training data, it can significantly alter the model’s output in response to prompts associated with that concept.
This leads to the definition of a new metric, \textbf{conditional sensitivity}, that quantifies the extent of publicity. It allows us to formalize the infringement criteria based on measurable behavioral changes. 

Building on the theoretical foundation, we further propose a detection framework, \textbf{D-Plus-Minus (DPM)}, which identifies potential copyright infringement by evaluating a conditional sensitivity metric with respect to a specific concept.
Specifically, to estimate the model’s dependency on the target concept, DPM simulates its inclusion and exclusion through two parallel fine-tuning branches: a \emph{learning branch}, where the model is encouraged to memorize the concept, and an \emph{unlearning branch}, where the model is trained to forget it. 
The values of CLIP-based embeddings are then compared with those of the original model, deriving an empirical conditional sensitivity. 
Moreover, the fine-tuning process inevitably affects the output behavior of a diffusion model, even for content unrelated to the target concept, which can compromise the reliability of our sensitivity metric. To address this, we statistically align the empirical sensitivity to the ideal one by referring to the orthogonal sensitivity.
We visualize the discrepancy in conditional sensitivity in Fig.~\ref{fig: DPM method}, where the larger change observed in infringed samples compared to non-infringed ones validates its use as a reliable measurement.

To evaluate the detection framework, we construct the \textbf{Copyright Infringement Detection Dataset (CIDD)} for different Large Vision Models (LVMs) and Large Vision and Language Models (LVLMs). It covers three high-risk categories: human face, architecture, and arts painting, with potentially infringing and non-infringing content. 
Our experiments against four models show that DPMs all yield weighted average AUC values above 80\%. 

In summary, our work makes the following contributions:

\begin{itemize}
	\item To our knowledge, we are the first to introduce differential privacy, a theoretical guarantee, into the notion of copyright infringement and its detection.
	\item We propose a theoretically and statistically grounded DPM framework for the post-hoc detection of copyright infringement. DPM simulates the inclusion and exclusion in two opposing fine-tuning branches: learning and unlearning, to measure the conditional sensitivity.
	\item With the construction of the CIDD dataset, DPM consistently shows excellent performance in terms of AUC and interpretability. Moreover, CIDD offers controllable samples, diverse classes, and well-aligned clean pairs, making it a valuable resource for advancing research on copyright detection.
\end{itemize}

\section{Background and Related Work} \label{sec: related work}

As artificial intelligence-generated content (AIGC) becomes popular in daily life and creative workflows, legal and ethical concerns over copyright infringement are now emerging as an urgent issue. Recent cases illustrate the conflicts between AIGC and intellectual property (IP) rights. For instance, Studio Ghibli sued OpenAI for IP infringement over ChatGPT-generated images that replicated its artistic style; Disney and NBC Universal both accused Midjourney of IP infringement for generating images resembling copyrighted works such as \textit{The Simpsons}, \textit{The Avengers}, and \textit{Toy Story}.

To address these challenges, several regulatory frameworks~\cite{airmf2023, interimgai2023, euaiact2024, generalpurposeai2025} are being introduced globally. However, they are difficult to enforce in practice due to the opacity of large-scale models. Once a model has been trained, it is impractical to retrain from scratch to satisfy these regulations. Hence, it becomes vital to determine whether a specific data point has been memorized and is urgently needed for post-hoc methods to detect and remove unauthorized content from deployed models.

On the technical front, \citet{zhou2023copyscopemodellevelcopyrightinfringement} propose CopyScope—a model-level framework that quantifies copyright infringement in the full diffusion workflow in three stages: identify influential components, quantify by FID-Shapley and evaluate contributions of models. 
DIAGNOSIS~\cite{wang2023diagnosis} detects the marked copyright infringement by first coating the protected dataset, then approximating the memorization strength, and making a hypothesis testing.
\citet{ma2024datasetbenchmarkcopyrightinfringement} provides a dataset and benchmark for copyright infringement protection in data unlearning of text-to-image diffusion models. It introduces a metric that evaluates similarity between two images from both semantic and stylistic perspectives. 

In the copyright infringement detection of LVLMs, \citet{wang2024evaluatingmitigatingipinfringement} propose a prompt engineering method that generates prompts to trigger IP violations in Large Language Models (LLMs) under black-box settings.
\citet{xu2025largevisionlanguagemodelsdetect} introduce an IP benchmark dataset and find out that LVLMs tend to misclassify the negative IP samples using in-context learning.
\citet{Chiba_Okabe_2025} design a systematic evaluation criterion to quantify and estimate the originality level of data, and introduce PREGen to modify outputs and lower originality values.

Despite the promising progress, these approaches face notable limitations. Model-based detection cannot localize infringement to the specific concepts, which is difficult to establish a clear causal link between the model behavior and particular copyrighted content; detecting via prompt query lacks theoretical interpretability and may be inaccurate due to the hallucinations and defense algorithms of LLMs.
In conclusion, it is necessary to demonstrate precise evidence of memorization and model behavior in the region of high-stakes legal or forensic settings.

\section{Preliminaries}

\subsection{Diffusion Models}

Diffusion Models (DMs)~\citep{ho2020denoising} are latent variable generative models which learns a data distribution $p(x)$ by gradually denoising variables sampled from a Gaussian distribution. This process is achieved by reversing a fixed Markov chain of stochastic noise injection. Formally, the forward process gradually adds noise to a data sample  $x_0 \sim q(x_0)$ over $T$ steps through a sequence of Gaussian transitions:
\begin{equation}
	q(x_t | x_{t-1}) = \mathcal{N}(x_t; \sqrt{1 - \beta_t} x_{t-1}, \beta_t I),
\end{equation}
where $\beta_t$ is a variance schedule, and $t$ is sampled from $\{1,\dots,T\}$. The generative model $\epsilon_\theta$ is then trained to reverse this process by predicting the noise $\epsilon$ from a noisy sample $x_t$. The training objective can be simplified to:
\begin{equation}
	\mathcal{L}_\text{{DM}} = \mathbb{E}_{x, \epsilon \sim \mathcal{N}(0,1), t} \left[ \left\| \epsilon - \epsilon_\theta(x_t, t) \right\|_2^2 \right].
\end{equation}

To reduce model complexity and preserve image details, Latent Diffusion Models (LDMs)~\cite{rombach2022high} employ the DM training in the lower-dimensional latent space, mapped by an encoder and reconstructed by a decoder.

\subsection{Unlearning of Diffusion Models}
Machine unlearning~\cite{bourtoule2021machine} is a task of removing the influence of specific dataset $A$ from a model trained on $X$ without retraining from scratch. 
In DMs, Naive Deletion fine-tunes the model on the retained dataset $X \setminus A$ by minimizing the simplified evidence-based lower bound (ELBO); other approaches include NegGrad~\cite{golatkar2020eternal} and SISS~\cite{alberti2025dataunlearningdiffusionmodels}.

\subsection{Differential Privacy}

Differential privacy (DP)~\cite{dwork2006calibrating} is a formal notion of algorithmic privacy, which aims to prevent the release of private information.

Let $D_0 \in \mathcal{W}^m$ denote a database from the input domain $\mathcal{W}$, and two databases $D, D' \in \mathcal{W}^m$ are said to be neighbouring datasets if $d(D, D') \leq 1$, where $d$ represents the distance between two datasets. Differential privacy is formally defined as follows:

\begin{definition}[Differential Privacy] \label{def: DP}
	A randomized algorithm $M: \mathcal{W}^m \rightarrow S$ is said to satisfy \emph{$(\epsilon, \delta)$-differential privacy} if for every pair of neighbouring databases $D$ and $D'$, and for all possible sets of outputs $S$:
	\begin{equation} \label{eq: DP}
		\Pr[M(D) \in S] \leq e^{\epsilon} \cdot \Pr[M(D') \in S] + \delta, 
	\end{equation}
	with probability $1-\delta$, where $\epsilon$ is the privacy budget and $\delta$ is a failure probability for the definition (typically $\delta \leq \frac{1}{n^2}$), which means the privacy guarantee cannot hold with probability $\delta$.
\end{definition}

For a query $M: D_0 \rightarrow \mathcal{R}$, the global sensitivity (GS) is the maximum distance for any two neighbouring datasets:
\begin{equation} \label{eq: GS}
	GS(M) = \max_{D, D': d(D,D') \leq 1} |M(D) - M(D')|, 
\end{equation}
and the local sensitivity of $M$ at $D: D_0$ fixes $D$ to be the actual dataset being queried, and considers all of its neighours:
\begin{equation} \label{eq: LS}
	LS(M, D) = \max_{D': d(D,D') \leq 1} | M(D) - M(D') |. 
\end{equation}

\section{Copyright Infringement} \label{sec: overview copyright infringement}

Copyright infringement occurs when an algorithm $G$ (e.g., a text-to-image model) is trained on a dataset $D$ that includes a subset of copyrighted or unauthorized data samples $D_C = \{ x_{c_i} \} \subset D$, without permission. 
A claim of infringement is typically established by showing that model's outputs reproduce or are substantially similar to the protectable expressive elements of the copyrighted samples in $D_c$.

\subsection{Formalization of Differential Privacy}  \label{sec: Formalization of Differential Privacy}

Some researchers~\cite{cilloni2023privacythreatsstablediffusion, somepalli2023diffusion, carlini2023extracting} have revealed significant privacy vulnerabilities in the outputs of diffusion models. For instance, membership inference~\cite{cilloni2023privacythreatsstablediffusion} has achieved success rates exceeding 60\%; data extraction and reproduction~\cite{somepalli2023diffusion, carlini2023extracting} have shown that diffusion models can memorize and reproduce individual training images. It suggests that these models exhibit almost \emph{no} conditional differential privacy regarding the training dataset, but with more publicity:

\begin{definition}[Conditional Publicity for Diffusion Models]
	A diffusion model $G(p)$ conditioned on an input $p$ (e.g., a text prompt) is said to satisfy \emph{$\epsilon$-conditional publicity} if there exist neighbouring training datasets $D$ and $D'$ that differ in a single element (which is corresponding to a concept within the semantic neighborhood $U(p)$ of the input $p$), and there exists at least one measurable subset $S \subseteq \{G(p_i) \mid p_i \in U(p)\}$ such that:
	\begin{equation} \label{eq: ConditionalPrivacy-T2I}
		\begin{aligned}
			\Pr[G(\theta_D, p) \in S] &> e^\epsilon \cdot \Pr[G(\theta_{D'}, p) \in S] \\
			&\gg \Pr[G(\theta_{D'}, p) \in S],
		\end{aligned}		
	\end{equation}
	where $\theta_D$ and $\theta_{D'}$ are the model parameters obtained by training model $G$ on datasets $D$ and $D'$ respectively. 
\end{definition}

Here, $\epsilon > 200$ indicates a substantial violation of privacy, i.e., the model output strongly depends on the presence of a specific concept in the training set.

From the perspective of differential privacy, a generative model's relevant outputs could be drastically altered when a single copyrighted training concept is modified. Conversely, an input that the model has not seen, namely, non-infringed data, could be relatively private. 

A detailed mathematical description of differential privacy in diffusion models is given in the appendix.

\subsection{Definition of Copyright Infringement}

Building upon the formalization of differential privacy, we now define the mathematical criteria for: when a generative model $G$ can be considered to have infringed upon a specific data point or concept (e.g., a copyrighted image or IP), and when it does not, based on the model behavior.

\begin{definition}[Copyright Infringement] \label{def: infringement}
	Let $x_c \in D_C$ denote a copyrighted data point or concept, and $p$ be an input (e.g., a text prompt) semantically aligned with $x_c$. We say that model $G$ trained on $D$ \emph{infringes} upon $x_c$ if there exists a measurable subset $S \subseteq \{G(p_i) \mid p_i \in U(p)\}$ such that:
	\begin{equation}
		\Pr[G(\theta_D, p) \in S] \gg \Pr[G(\theta_{D'}, p) \in S],
	\end{equation}
	where $D' = D \setminus \{x_c\}$ is a neighboring dataset.
\end{definition}

This definition implies that the relevant output of $G$ is significantly influenced by the presence of $x_c$, thereby violating copyright.

\begin{definition}[Copyright Non-Infringement] \label{def: noninfringement}
	Let $x$ be a non-infringed data point or concept such that $x \notin D$ for all training datasets considered. We say that model $G$ \emph{does not infringe} upon $x$ if for any input $p$ and for all measurable subsets $S \subseteq \{G(p_i) \mid p_i \in U(p)\}$ such that:
	\begin{equation}
		\Pr[G(\theta_D, p) \in S] = \Pr[G(\theta_{D'}, p) \in S].
	\end{equation}
	where $D$ and $D'$ are any neighboring training datasets, and $\theta_D$, $\theta_{D'}$ denote the model parameters trained on $D$ and $D'$ respectively.
\end{definition}

This definition ensures that the output distribution is invariant with respect to the inclusion or exclusion of $x$, thereby guaranteeing that $x$ does not influence model behavior.
Taken together, these definitions provide a theoretical framework for copyright infringement detection.

\section{Detection of Copyright Infringement}

\subsection{Definition of Infringement Detection}

Within the theoretical framework described above, infringement can be assessed by measuring distributional changes in the model's output caused by the presence of a specific training data or concept. 
A significant change implies that memorization has occurred, suggesting potential infringement, whereas invariance suggests no infringement.
\begin{definition}[Detection of Copyright Infringement]
	\emph{Detection of copyright infringement} is the estimation of the likelihood that a specific visual input $\hat{x}_i \in \mathcal{X}$ is used in, or is memorized by the model during training on dataset $D$, and its influence is manifested in the model's output behavior. Formally, we define the detection task as a confidence scoring function range in $[0,1]$:
	\begin{equation} \label{eq: CI}
		f(\hat{x}_i) = \mathbb{P} \left\{ \hat{x}_i \in D \ \text{or} \ \tau(\hat{x}_i, G) \right\},
	\end{equation}
	where $\hat{x}_i$ is the query image or visual concept to be evaluated; $D$ is the training dataset (possibly unknown); $\tau(\hat{x}_i, G)$ is an influence function that quantifies the dependence of the model $G$ on $\hat{x}_i$, based on metrics such as semantic similarity, gradient attribution, or reconstruction closeness.
\end{definition}
A high confidence score indicates potential copyright infringement, while a low one suggests low likelihood.

\subsection{Conditional Sensitivity Metric}

To quantify function $f(\hat{x}_i)$ in Eq.~\ref{eq: CI}, we introduce the notion of \emph{conditional sensitivity} as a principal metric for standardizing the confidence score of copyright infringement, analogous to local sensitivity in Eq.~\ref{eq: LS} in differential privacy. Sensitivity captures how much a query function $M$ depends on a specific training sample:
\begin{equation} \label{eq: CS}
	CS(M, \hat{x}_i) = \max_{D, D': D \triangle D' \leq \{\hat{x}_i\}} \left| M(D) - M(D') \right|,
\end{equation}
where $D$ and $D'$ are neighboring datasets that differ by the inclusion or exclusion of the conditional datapoint $\hat{x}_i$, and the function $M(D)$ denotes the output of a query function when trained on dataset $D$.

\subsection{D-Plus-Minus (DPM) Detection}

\begin{figure*}[t]
	\centering
	\includegraphics[width=0.98\textwidth]{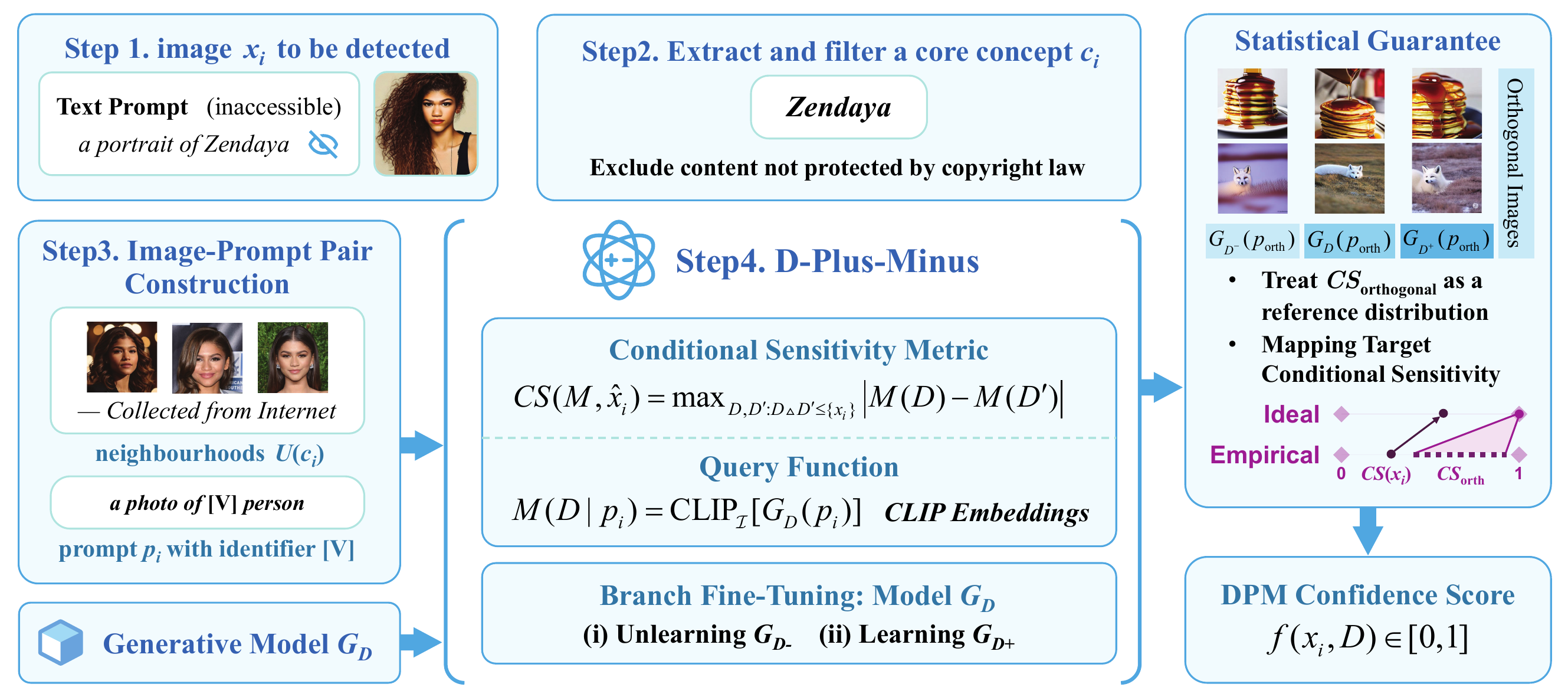} 
	\caption{Detection Procedure of Copyright Infringement. Firstly, we extract a concept from the target image. Next, we collect several images associated with this concept to form a neighborhood subset, and construct a prompt using a unique identifier (e.g., a photo of [V] person). Finally, we feed these image-prompt pairs into the D-Plus-Minus platform to compute a DPM score with statistical guarantee.}	
	\label{fig: procedure}
\end{figure*}

\paragraph{Problem Setting} \label{sec: setting}
Detection of copyright infringement faces several practical challenges, such as scalability, inaccessibility of training data, conditional input unavailability, and insufficient theoretical guarantees.
In light of these issues, our work operates under a realistic and challenging set of assumptions:

(1) white-box access to a pretrained model; 

(2) absence of corresponding input prompt; 

(3) inaccessibility of training data. 

Given these constraints,  we aim to identify evidence of infringement as reflected in the model’s observable behavior, grounded in the above theoretical framework.
Figure~\ref{fig: procedure} illustrates the core procedure of our proposed D-Plus-Minus detection framework. It contains several steps: concept extraction, image collection, branch training, conditional sensitivity measurement, statistics analysis, and branch merging.

\paragraph{Pre-Processing: Concept extraction and Image Collection}
Given a target image $\hat{x}_i$, we firstly extract and filter a core concept to exclude non-detected contents. As fine-tuning model needs several images related to the concept, we construct a neighborhood $U(\hat{x}_i)$ of the target concept as our training dataset, consisting of similar semantics to be detected, and then specify a general prompt $p_i$ (format as: a photo of [V] [class]) with identifier (e.g., “[V]”, “sks”). 

\paragraph{Branch Training}
Since the presence or absence of the target data point $\hat{x}$ in the training dataset is unknown, we simulate its inclusion and exclusion through model fine-tuning: one is the learning branch where the model $G$ is fine-tuned to include $\hat{x}_i$ (denoted $G_{D^+}$), and the other one is the unlearning branch where the model aims to remove it (denoted $G_{D^-}$). The branch objective can be defined as:
{\small
	\begin{equation} \label{eq: d+-, simplified}
		\mathcal{L}_{\text{branch}}(x_i) 
		= I \cdot 
		\mathbb{E}_{x_i, p, \epsilon, t} 
		\left[ w_t \left\| G(\alpha_t x_i + \sigma_t \epsilon, p_i) 
		- x_i \right\|_2^2 \right],
	\end{equation}
}
where $\alpha_t$, $\sigma_t$, $w_t$ are functions of the diffusion timestep $t \sim U([0,1])$, controlling the noise schedule and denoising weight, and $I$ denotes the branch indicator:
\begin{equation}
	I =
	\begin{cases}
		+1, & \text{for the learning branch} \\
		-1, & \text{for the unlearning branch}
	\end{cases} .
\end{equation}

\paragraph{Conditional Sensitivity Measurement}
To assess the effect of $\hat{x}_i$ on the model's generation behavior, we compare the outputs between a fine-tuned model $G_{D^*}$, i.e., $G_{D^+}$ or $G_{D^-}$, and $G_D$ under the same text prompt $p_i$, and specifically define the conditional sensitivity metric via cosine similarity:
{\small
	\begin{equation}
		\begin{aligned}
			CS(M, \hat{x}_i, D^*) = 
			\max_{\substack{D, D' :
					\\ D \triangle D' \leq \{\hat{x}_i\}}}
			\frac
			{M(G_D \mid p_i) \cdot M(G_{D^*} \mid p_i)}
			{\left\| M(G_D \mid p_i) \right\| \left\| M(G_{D^*} \mid p_i) \right\|}.
		\end{aligned}
	\end{equation}
}
Here, \( M(\cdot) = \text{CLIP}_\mathcal{I}(\cdot) \) denotes the CLIP~\cite{radford2021learning} image encoder, which serves as a query function to capture the semantics similarity between two outputs. The nearer the conditional sensitivity is to $1$, the less sensitive it is. 

Considering that reaching absolute learning or unlearning such that $D \triangle D' = \{\hat{x}_i\}$ is impossible and impractical, we select fine-tuned models in several training timesteps for measurement, which means $D \triangle D' < \{\hat{x}_i\}$. Here, “$<$” implies that the model is not fully trained or memorized ${\hat{x}_i}$ on $D'$ compared to $D$, and it still satisfies the requirement in Eq.~\ref{eq: CS} that $D \triangle D' \leq \{\hat{x}_i\}$.

\paragraph{Statistics Analysis}
As fine-tuning in both branches will inevitably alter model's generalization behavior, especially on unrelated content, we construct a reference distribution by generating orthogonal images, clarifying the global parameter shifts. 
To statistically standardize the conditional sensitivity score across different target samples, we normalize it by the average conditional sensitivity over the orthogonal set. Ideally, the conditional sensitivity among orthogonal samples should be $1$ (exactly the same outputs), leading to:
\[
{CS(M,{x}_i, D^*)}:{\overline{CS(M, {X}_\text{orth}, D^*)}}
= {\hat{CS}(M,\hat{x}_i, D^*)}:1 
\]
\begin{equation}
	\Leftrightarrow \hat{CS}(M,\hat{x}_i, D^*) = 
	\frac{CS(M,{x}_i, D^*)}
	{\overline{CS(M, {X}_\text{orth}, D^*)}},
\end{equation}
where ${X}_\text{orth}$ denotes the orthogonal sample set, and $\hat{CS}$ denotes the ideal conditional sensitivity. It aligns the difference among fine-tuning models of different samples, and provides a statistically grounded reference rather than an accuracy-based one.

\begin{table*}[!t]	
	\setlength{\tabcolsep}{8pt}
	\centering
	
	\begin{tabular}{c cc cc cc cc}
		\Xhline{1.2pt}
		\multirow{2}{*}{\textbf{Class}} 
		& \multicolumn{2}{c}{\textbf{SD1.4}} 
		& \multicolumn{2}{c}{\textbf{SDXL-1.0}}
		& \multicolumn{2}{c}{\textbf{SANA-0.6B}}
		& \multicolumn{2}{c}{\textbf{FLUX.1}} \\
		& AUC $\uparrow$ & SoftAcc $\uparrow$ 
		& AUC $\uparrow$ & SoftAcc $\uparrow$ 
		& AUC $\uparrow$ & SoftAcc $\uparrow$
		& AUC $\uparrow$ & SoftAcc $\uparrow$ \\
		
		\midrule 
		Human Face    & 0.9011 & 0.8058 & 0.7011 & 0.6289 & 0.8062 & 0.7285 & 0.7531 & 0.6419\\
		Architecture  & 0.8021 & 0.7106 & 0.9256 & 0.8488 & 0.9043 & 0.8224 & 0.9500 & 0.8606\\
		Arts Painting & 0.8555 & 0.7604 & 0.8881 & 0.8550 & 0.8140 & 0.7204 & 0.7326 & 0.6935\\
		\midrule
		Weighted Average & 0.8584 & 0.7644 & 0.8170 & 0.7523 & 0.8398 & 0.7571 & 0.8122 & 0.7247\\
		Merged Total  & 0.8071 & 0.6726 & 0.7800 & 0.7234 & 0.7914 & 0.6855 & 0.8257 & 0.7039\\
		\Xhline{1.2pt}	\end{tabular}
	
	\caption{Quantitative Detection Metrics. Models are run separately on the classes of CIDD dataset in different models. \textit{Merged Total} means that the $\Delta CS(\cdot)$ are normalized altogether, while others are normalized within the class.}
	
	\label{table: quantitative}
\end{table*} 

\begin{table}[!t]
	\setlength{\tabcolsep}{12pt} 
	\centering
	
	\begin{tabular}{ccc}
		\Xhline{1.2pt}
		\textbf{Level} & \textbf{Categories} & \textbf{Examples} \\
		\midrule
		Level 1        & Technics   & — \\
		Level 2        & Content    & Human Face* \\
		Level 3-1      & Structure  & Architecture* \\
		Level 3-2      & Style      & Arts Painting* \\
		Level 4        & Semantics  & Plots \& Themes \\
		\Xhline{1.2pt}
	\end{tabular}
	
	\caption{Hierarchical Categories of Copyright Infringement. It is ordered from low-level perceptual features to high-level conceptual constructs. “*” means the classes in CIDD.}
	\label{table: Hierarchical Categories}
\end{table}

\begin{figure}[!t]
	\centering
	\includegraphics[width=0.48\textwidth]{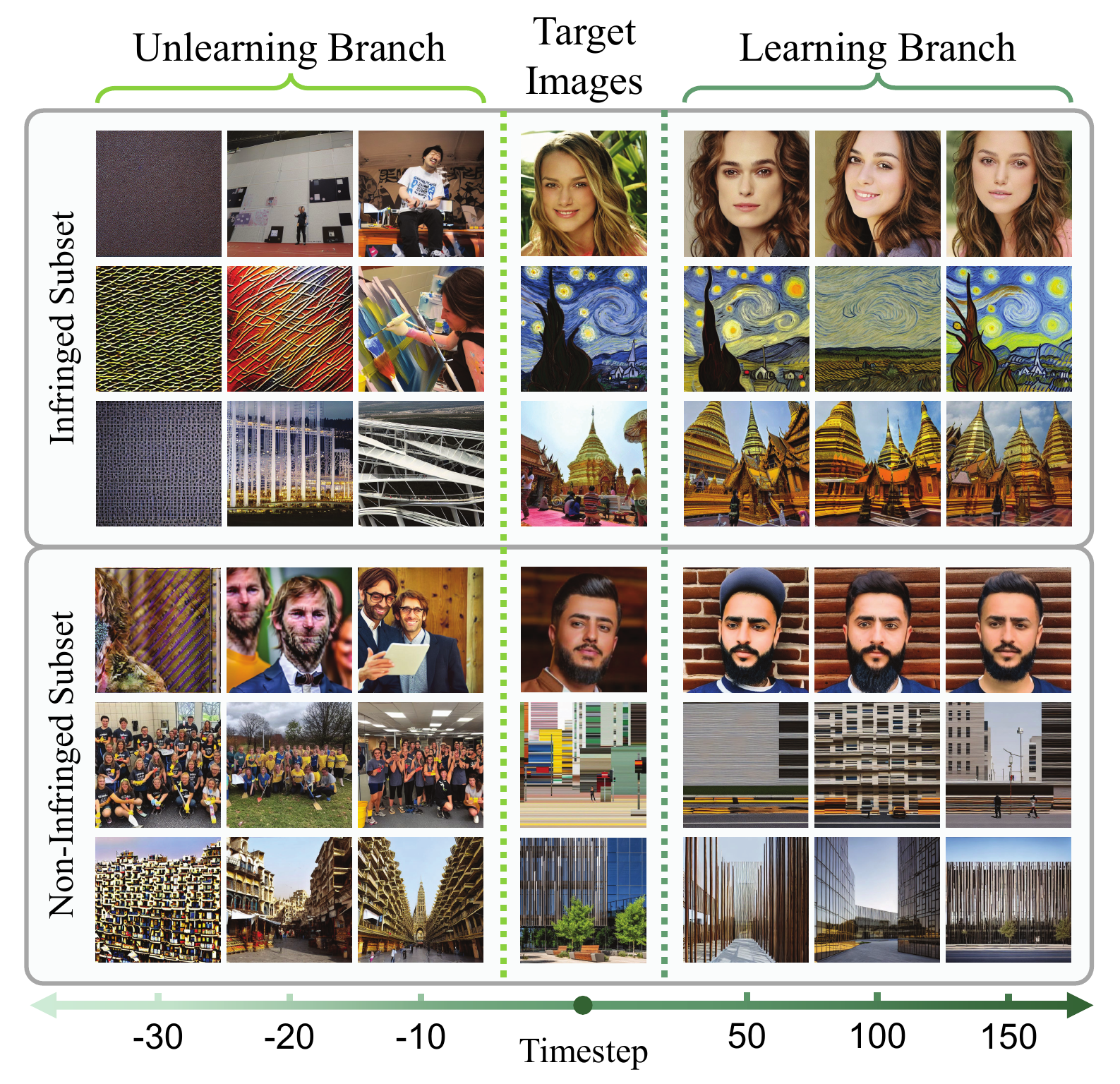} 
	\caption{Qualitative visualization of two branches across different timesteps. Models tend to learn and unlearn faster with infringed samples, while slower on non-infringed ones, and cannot learn exact elements in the target images.}	
	\label{fig: branch visual}
\end{figure}

\paragraph{Branch Merging}
By merging the two branches together, the D-Plus-Minus score can be finally written as:
\begin{align}
	\quad &\text{DPM} (M, \hat{x}_i, D_{\text{total}}) \nonumber\\ = &\sigma 
	\left[ \alpha \cdot
	\frac
	{\Delta \hat{CS}(M, \hat{x}_i) - 
		\min(\Delta \hat{CS}_{\text{class}})}
	{\max(\Delta \hat{CS}_{\text{class}}) - 
		\min(\Delta \hat{CS}_{\text{class}})} 
	\right],
\end{align}
where $\Delta \hat{CS}(M, \hat{x}_i) = \hat{CS}(M, \hat{x}_i, D^+) - \hat{CS}(M, \hat{x}_i, D^-)$ denotes the contrastive sensitivity between two branches, $\sigma(\cdot)$ is the Sigmoid function, and $\alpha>0$ is a scaling coefficient that controls the sharpness of the score mapping. DPM score reflects the model's total conditional sensitivity to the presence and absence of $\hat{x}_i$. Higher scores will suggest potential memorization and infringement.

\section{Hierarchical Categories of Infringement}

To comprehensively categorize copyright infringement in generative models, we propose a hierarchical taxonomy containing four levels of content resemblance, as shown in table~\ref{table: Hierarchical Categories}. 
While low-level technical features, such as edge detectors, texture filters or color histograms, may cause perceptual similarity, they are typically insufficient on their own to constitute legal infringement. Instead, it is more likely to be substantiated when higher-level similarities, such as character compositions or stylistic elements, indicate derivation from copyrighted works. Furthermore, the highest level of semantics similarity is often shown in time-sequenced media (e.g., videos), rather than being confined to one image.

\section{Copyright Infringement Detection Dataset}

To overcome the limitations of existing datasets, we construct the \textbf{C}opyright \textbf{I}nfringement \textbf{D}etection \textbf{D}ataset (CIDD). 
It contains several classes of orthogonal prompts and three image classes that are most likely to be infringed: human face, architecture, and arts painting, with a total of 429 concepts and 2,397 images. Each class is mapped to one hierarchical category in table~\ref{table: Hierarchical Categories}. 
Crucially, CIDD includes both infringed and non-infringed concepts, each of which is annotated with a binary infringement label based on its source and content provenance, and is paired with 3 to 6 neighbourhood images,
enabling robust learning and evaluation under weak and probabilistic assumptions. 

Data examples, summaries, and collection methods are detailed in the appendix.

\begin{figure*}[h]
	\centering
	\includegraphics[width=1\textwidth]{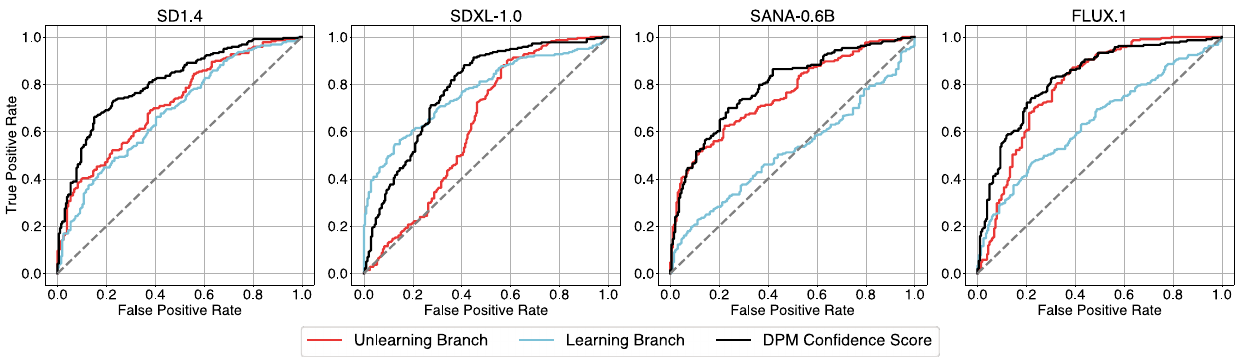}
	\caption{ROC curves in four representative models. The proposed DPM confidence score consistently outperforms individual branches in terms of AUC, demonstrating its superior capability in distinguishing infringed from non-infringed samples.
	}
	\label{fig: roc-all}
\end{figure*}

\begin{figure}[!t]
	\centering
	\includegraphics[width=0.41\textwidth]{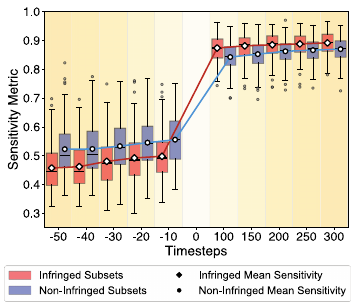} 
	\caption{Conditional Sensitivity in SD1.4. Infringed samples are more sensitive to the model outputs behavior.}
	\label{fig: sens sd1.4}
\end{figure}

\section{Experiment}
As mentioned in Section~\ref{sec: related work}, Copyscope cannot distinguish image-level infringement; DIAGNOSIS is an infringement detection method based on a priori dataset coating; CPDM can only evaluate the extent of infringement after data unlearning. 
These methods target unrealistic settings and different goals, making direct comparison with DPM inappropriate.
As for detection in LLMs and LVLMs, DPM can also be extended to this setting, which we leave as a promising work in the future.
In this section, we evaluate the proposed DPM on the CIDD dataset across four text-to-image models: Stable Diffusion v1.4 (SD1.4)~\cite{rombach2022high}, Stable Diffusion XL (SDXL)~\cite{podell2023sdxlimprovinglatentdiffusion}, SANA~\cite{xie2024sana}, and FLUX~\cite{labs2025flux1kontextflowmatching} models.

We report both quantitative and qualitative results, and perform an ablation study to validate all modules in DPM. 
For the quantitative evaluation, we employ two standard metrics: ROC-AUC, which measures the ability to distinguish infringed from non-infringed samples, and Soft Accuracy (SoftAcc), a stricter metric that evaluates the numerical alignment between confidence scores and ground truth labels, penalizing slight deviations from the correct value.

\subsection{Quantitative Experiments}

We report per-class and aggregated results in Table~\ref{table: quantitative}. DPM consistently achieves a weighted-average AUC above 80\% (as shown in Fig.~\ref{fig: roc-all}) and SoftAcc above 72\% across models, demonstrating its strong generalization.

Moreover, detection performance varies across classes, mainly driven by the distinct generalization capabilities of the CLIP models, the specific parameter configurations of the diffusion models, and the utilized fine-tuning timesteps.

\subsection{Qualitative Experiments}

To better understand the behaviors of our two-branch framework, we visualize representative samples in Fig.~\ref{fig: branch visual}. 

These results suggest that the infringed samples have triggered the model’s internal memory, causing it to reproduce memorized content with high fidelity; while the unlearning branch rapidly suppresses such resemblance, producing visually dissimilar outputs after just a few negative steps.
For non-infringed samples, both branches tend to maintain stable generations with limited directional change.

\subsection{Ablation Study and Robustness}

We provide a detailed ablation study and robustness analysis in the appendix. We find that: 
(1) the proposed conditional sensitivity metric effectively distinguishes between infringed and non-infringed samples, as partially illustrated in Fig.~\ref{fig: sens sd1.4}; 
(2) merging of two branches and multiple timesteps both improve and stabilize detection performance;
(3) image degradation affects little to the detection performance.

\section{Conclusion}

In this paper, we formalize copyright infringement with differential privacy and introduce DPM, a principled approach for detecting copyright infringement in generative text-to-image diffusion models.
DPM operates by fine-tuning a model in opposing “learning” and “unlearning” directions for a target concept, and then measuring the resulting behavioral divergence as evidence of infringement. 
We also construct the CIDD dataset to support standardized benchmarking.
Experiments on four representative models validate the effectiveness of DPM.
Overall, our approach provides a practical and theoretically grounded solution to the copyright infringement detection in generative AI.

\bibliography{myReference}

@inproceedings{dwork2006calibrating,
	title={Calibrating noise to sensitivity in private data analysis},
	author={Dwork, Cynthia and McSherry, Frank and Nissim, Kobbi and Smith, Adam},
	booktitle={Theory of Cryptography: Third Theory of Cryptography Conference, TCC 2006, New York, NY, USA, March 4-7, 2006. Proceedings 3},
	pages={265--284},
	year={2006},
	organization={Springer}
}

@misc{ma2024datasetbenchmarkcopyrightinfringement,
	title={A Dataset and Benchmark for Copyright Infringement Unlearning from Text-to-Image Diffusion Models}, 
	author={Rui Ma and Qiang Zhou and Yizhu Jin and Daquan Zhou and Bangjun Xiao and Xiuyu Li and Yi Qu and Aishani Singh and Kurt Keutzer and Jingtong Hu and Xiaodong Xie and Zhen Dong and Shanghang Zhang and Shiji Zhou},
	year={2024},
	eprint={2403.12052},
	archivePrefix={arXiv},
	primaryClass={cs.CV},
	url={https://arxiv.org/abs/2403.12052}, 
}

@misc{wang2024evaluatingmitigatingipinfringement,
	title={Evaluating and Mitigating IP Infringement in Visual Generative AI}, 
	author={Zhenting Wang and Chen Chen and Vikash Sehwag and Minzhou Pan and Lingjuan Lyu},
	year={2024},
	eprint={2406.04662},
	archivePrefix={arXiv},
	primaryClass={cs.CV},
	url={https://arxiv.org/abs/2406.04662}, 
}

@misc{xu2025largevisionlanguagemodelsdetect,
	title={Can Large Vision-Language Models Detect Images Copyright Infringement from GenAI?}, 
	author={Qipan Xu and Zhenting Wang and Xiaoxiao He and Ligong Han and Ruixiang Tang},
	year={2025},
	eprint={2502.16618},
	archivePrefix={arXiv},
	primaryClass={cs.CV},
	url={https://arxiv.org/abs/2502.16618}, 
}

@inproceedings{rombach2022high,
	title={High-resolution image synthesis with latent diffusion models},
	author={Rombach, Robin and Blattmann, Andreas and Lorenz, Dominik and Esser, Patrick and Ommer, Bj{\"o}rn},
	booktitle={Proceedings of the IEEE/CVF conference on computer vision and pattern recognition},
	pages={10684--10695},
	year={2022}
}

@article{ho2020denoising,
	title={Denoising diffusion probabilistic models},
	author={Ho, Jonathan and Jain, Ajay and Abbeel, Pieter},
	journal={Advances in neural information processing systems},
	volume={33},
	pages={6840--6851},
	year={2020}
}

@inproceedings{radford2021learning,
	title={Learning transferable visual models from natural language supervision},
	author={Radford, Alec and Kim, Jong Wook and Hallacy, Chris and Ramesh, Aditya and Goh, Gabriel and Agarwal, Sandhini and Sastry, Girish and Askell, Amanda and Mishkin, Pamela and Clark, Jack and others},
	booktitle={International conference on machine learning},
	pages={8748--8763},
	year={2021},
	organization={PmLR}
}

@misc{cilloni2023privacythreatsstablediffusion,
	title={Privacy Threats in Stable Diffusion Models}, 
	author={Thomas Cilloni and Charles Fleming and Charles Walter},
	year={2023},
	eprint={2311.09355},
	archivePrefix={arXiv},
	primaryClass={cs.CV},
	url={https://arxiv.org/abs/2311.09355}, 
}

@article{bousquet2020synthetic,
	title={Synthetic Data Generators--Sequential and Private},
	author={Bousquet, Olivier and Livni, Roi and Moran, Shay},
	journal={Advances in Neural Information Processing Systems},
	volume={33},
	pages={7114--7124},
	year={2020}
}

@inproceedings{carlini2023extracting,
	title={Extracting training data from diffusion models},
	author={Carlini, Nicolas and Hayes, Jamie and Nasr, Milad and Jagielski, Matthew and Sehwag, Vikash and Tramer, Florian and Balle, Borja and Ippolito, Daphne and Wallace, Eric},
	booktitle={32nd USENIX security symposium (USENIX Security 23)},
	pages={5253--5270},
	year={2023}
}

@inproceedings{golatkar2020eternal,
	title={Eternal sunshine of the spotless net: Selective forgetting in deep networks},
	author={Golatkar, Aditya and Achille, Alessandro and Soatto, Stefano},
	booktitle={Proceedings of the IEEE/CVF conference on computer vision and pattern recognition},
	pages={9304--9312},
	year={2020}
}

@misc{alberti2025dataunlearningdiffusionmodels,
	title={Data Unlearning in Diffusion Models}, 
	author={Silas Alberti and Kenan Hasanaliyev and Manav Shah and Stefano Ermon},
	year={2025},
	eprint={2503.01034},
	archivePrefix={arXiv},
	primaryClass={cs.LG},
	url={https://arxiv.org/abs/2503.01034}, 
}

@inproceedings{bourtoule2021machine,
	title={Machine unlearning},
	author={Bourtoule, Lucas and Chandrasekaran, Varun and Choquette-Choo, Christopher A and Jia, Hengrui and Travers, Adelin and Zhang, Baiwu and Lie, David and Papernot, Nicolas},
	booktitle={2021 IEEE symposium on security and privacy (SP)},
	pages={141--159},
	year={2021},
	organization={IEEE}
}

@misc{euaiact2024,
	title = {Regulation (EU) 2024/1689 of the European Parliament and of the Council of 13 June 2024 laying down harmonised rules on artificial intelligence and amending various EU regulations and directives (Artificial Intelligence Act)},
	author = "{Official Journal of the European Union}",
	howpublished = {\url{https://artificialintelligenceact.eu/the-act/}},
	year = {2024},
	note = {Accessed: 2025-07-20}
}

@misc{generalpurposeai2025,
	title = {Code of Practice for General-Purpose AI Models},
	author = {Nuria Oliver and Alexander Peukert and Rishi Bommasani and Céline Castets-Renard and Matthias Samwald and Marta Ziosi and Alexander Zacherl and Yoshua Bengio and Daniel Privitera and Nitarshan Rajkumar and Marietje Schaake and Markus Anderljung and Anka Reuel},
	howpublished = {\url{https://digital-strategy.ec.europa.eu/en/policies/contents-code-gpai}},
	year = {2025},
	note = {Accessed: 2025-07-20}
}

@misc{airmf2023,
	title = {Artificial Intelligence Risk Management Framework (AI RMF 1.0)},
	author = "{NIST}",
	howpublished = {\url{https://www.nist.gov/itl/ai-risk-management-framework}},
	year = {2023},
	note = {Accessed: 2025-07-20}
}

@misc{interimgai2023,
	title = {Interim Measures for the Management of Generative Artificial Intelligence Services},
	author = {{State Internet Information Office of the People's Republic of China} and {National Development and Reform Commission} and {Ministry of Education of the People's Republic of China} and {Ministry of Science and Technology of the People's Republic of China} and {Ministry of Industry and Information Technology of the People's Republic of China} and {Ministry of Public Security of the People's Republic of China} and {State Administration of Radio and Television}},
	howpublished =  {\url{https://www.gov.cn/zhengce/zhengceku/202307/content_6891752.htm}},
	year         = {2023},
	note         = {Accessed: 2025-07-20}
}

@misc{zhou2023copyscopemodellevelcopyrightinfringement,
	title={CopyScope: Model-level Copyright Infringement Quantification in the Diffusion Workflow}, 
	author={Junlei Zhou and Jiashi Gao and Ziwei Wang and Xuetao Wei},
	year={2023},
	eprint={2311.12847},
	archivePrefix={arXiv},
	primaryClass={cs.CV},
	url={https://arxiv.org/abs/2311.12847}, 
}

@inproceedings{somepalli2023diffusion,
	title={Diffusion art or digital forgery? investigating data replication in diffusion models},
	author={Somepalli, Gowthami and Singla, Vasu and Goldblum, Micah and Geiping, Jonas and Goldstein, Tom},
	booktitle={Proceedings of the IEEE/CVF conference on computer vision and pattern recognition},
	pages={6048--6058},
	year={2023}
}

@article{xie2024sana,
  title={Sana: Efficient high-resolution image synthesis with linear diffusion transformers},
  author={Xie, Enze and Chen, Junsong and Chen, Junyu and Cai, Han and Tang, Haotian and Lin, Yujun and Zhang, Zhekai and Li, Muyang and Zhu, Ligeng and Lu, Yao and others},
  journal={arXiv preprint arXiv:2410.10629},
  year={2024}
}

@misc{podell2023sdxlimprovinglatentdiffusion,
      title={SDXL: Improving Latent Diffusion Models for High-Resolution Image Synthesis}, 
      author={Dustin Podell and Zion English and Kyle Lacey and Andreas Blattmann and Tim Dockhorn and Jonas Müller and Joe Penna and Robin Rombach},
      year={2023},
      eprint={2307.01952},
      archivePrefix={arXiv},
      primaryClass={cs.CV},
      url={https://arxiv.org/abs/2307.01952}, 
}

@article{Chiba_Okabe_2025,
   title={Tackling copyright issues in AI image generation through originality estimation and genericization},
   volume={15},
   ISSN={2045-2322},
   url={http://dx.doi.org/10.1038/s41598-025-90827-1},
   DOI={10.1038/s41598-025-90827-1},
   number={1},
   journal={Scientific Reports},
   publisher={Springer Science and Business Media LLC},
   author={Chiba-Okabe, Hiroaki and Su, Weijie J.},
   year={2025},
   month=mar }

@inproceedings{wang2023diagnosis,
  title={DIAGNOSIS: Detecting Unauthorized Data Usages in Text-to-image Diffusion Models},
  author={Wang, Zhenting and Chen, Chen and Lyu, Lingjuan and Metaxas, Dimitris N and Ma, Shiqing},
  booktitle={The Twelfth International Conference on Learning Representations},
  year={2023}
}

@misc{labs2025flux1kontextflowmatching,
      title={FLUX.1 Kontext: Flow Matching for In-Context Image Generation and Editing in Latent Space},
      author={Black Forest Labs and Stephen Batifol and Andreas Blattmann and Frederic Boesel and Saksham Consul and Cyril Diagne and Tim Dockhorn and Jack English and Zion English and Patrick Esser and Sumith Kulal and Kyle Lacey and Yam Levi and Cheng Li and Dominik Lorenz and Jonas Müller and Dustin Podell and Robin Rombach and Harry Saini and Axel Sauer and Luke Smith},
      year={2025},
      eprint={2506.15742},
      archivePrefix={arXiv},
      primaryClass={cs.GR},
      url={https://arxiv.org/abs/2506.15742},
}

\newpage

\setcounter{secnumdepth}{2}
\appendix
\begingroup
\makeatletter

\section{Supplementary on the Theory of Copyright Infringement} \label{appendix: theory of copyright infringement}

\subsection{Introduction of Differential Privacy}

\begin{quote}
	``\textit{The notion of differential privacy}: A randomized algorithm that receives a sequence of data points $\bar{x}$ as input is differentially private if removing/replacing a single data point in its input, does not affect its output $y$ by much; more accurately, for any event $E$ over the output $y$ that has non-negligible probability on input $\bar{x}$, then the probability remains non-negligible even after modifying one data point in $\bar{x}$.''
	--- \textit{Synthetic Data Generators -- Sequential and Private}~\cite{bousquet2020synthetic}
\end{quote}

Inspired by the quote above, the original Def.~\ref{def: DP} can be reformulated for generative models as follows:

\begin{definition}[Conditional Differential Privacy for Generative Models] \label{def: CDP-T2I}
	A generative model algorithm $G(p)$ conditioned on an input $p$ (e.g., a text prompt) is said to satisfy \emph{$(\epsilon, \delta)$-conditional differential privacy} if for all neighbouring training datasets $D$ and $D'$ that differ in a single element (which is corresponding to a concept within the semantic neighborhood $U(p)$), and for all measurable subsets $S \subseteq \{G(p_i) \mid p_i \in U(p)\}$:
	\begin{equation} \label{eq: DP-T2I}
		\Pr[G(\theta_D, p) \in S] \leq e^\epsilon \cdot \Pr[G(\theta_{D'}, p) \in S] + \delta, 
	\end{equation}
	where $\theta_D$ and $\theta_{D'}$ are the model parameters obtained by training model $G$ on datasets $D$ and $D'$ respectively, and the privacy guarantee holds with probability at least $1 - \delta$.
\end{definition}

\subsection{Relative Privacy for Non-Infringed Data}

Based on the principle illustrated in Section~\ref{sec: Formalization of Differential Privacy}, we formalize the following proposition:

\begin{proposition}
	Let $x$ be a data point (e.g., a prompt-image pair) that does not appear in any subset of the training dataset $D$ or its neighbouring dataset $D'$. Then, under Def.~\ref{def: CDP-T2I}, the generative model $G$ satisfies $(0, 0)$-conditional differential privacy with respect to $x$ for any prompt $p$.
\end{proposition}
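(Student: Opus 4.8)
The plan is to read ``$(0,0)$-conditional differential privacy with respect to $x$'' as the instantiation of Definition~\ref{def: CDP-T2I} in which the single element by which the two neighbouring datasets differ is (the concept corresponding to) $x$, and then to show that the hypothesis $x \notin D$, $x \notin D'$ forces the two probabilities in \eqref{eq: DP-T2I} to coincide. First I would make the quantifier explicit: to certify the guarantee for a prompt $p$ one must verify \eqref{eq: DP-T2I} for every admissible pair of neighbouring datasets that differ precisely in $x$ and for every measurable $S \subseteq \{G(p_i) \mid p_i \in U(p)\}$. Two datasets that differ in the single element $x$ must contain $x$ in exactly one of them; since $x$ lies in neither $D$ nor $D'$ (hence in no subset of either), there is no such admissible pair, so the inequality holds vacuously, in particular with $\epsilon = 0$ and $\delta = 0$.

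To give the statement more content, I would alternatively keep the datasets $D$ and $D'$ named in the proposition and use that $D'$ is ``the neighbour of $D$ with respect to $x$'', i.e. $D' = D \cup \{x\}$ or $D' = D \setminus \{x\}$. The hypothesis $x \notin D$ makes the removal case trivial ($D' = D$) and $x \notin D'$ excludes the addition case, so $D = D'$. Consequently $\theta_D$ and $\theta_{D'}$ arise from the same (possibly stochastic) training procedure applied to the same dataset, hence are equal in distribution. Pushing these identically distributed parameters through the same generator map $G(\cdot, p)$ gives $\Pr[G(\theta_D, p) \in S] = \Pr[G(\theta_{D'}, p) \in S]$ for every measurable $S$, in particular for every $S \subseteq \{G(p_i) \mid p_i \in U(p)\}$. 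Substituting this equality into \eqref{eq: DP-T2I} shows the inequality holds with $\epsilon = 0$ and $\delta = 0$, i.e.\ $G$ is $(0,0)$-conditionally differentially private with respect to $x$.

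The only genuine point of care is the first step --- fixing what ``conditional DP with respect to $x$'' should mean, since Definition~\ref{def: CDP-T2I} is a universal claim over all neighbouring pairs and over concepts in $U(p)$ whereas the proposition isolates a single absent point $x$. Once this is settled (either ``restrict the quantifier to modifications of $x$'' or ``take $D'$ to be the $x$-neighbour of $D$''), the remainder is immediate and needs no quantitative estimate. A minor secondary point is the handling of training stochasticity: the fact that $\theta_D$ and $\theta_{D'}$ are identically distributed when $D = D'$ is precisely what guarantees the two output probabilities agree on \emph{every} measurable set $S$, and this equality in distribution is the only probabilistic ingredient of the argument.
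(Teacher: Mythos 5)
Your argument is internally consistent, but it resolves the ambiguity you yourself flagged in the opposite way from the paper, and this changes the character of the proof. The paper does \emph{not} take the differing element of the neighbouring pair to be $x$ itself: it keeps $D \neq D'$ as an arbitrary neighbouring pair in the sense of Def.~\ref{def: CDP-T2I} (so they differ in some single concept, necessarily one other than $x$, because $x$ lies in neither dataset), restricts attention to output sets $S \subseteq U(x)\setminus\{x\}$, and then asserts that, since the single-element variation does not involve $x$, the distributions of $G(\theta_D,p)$ and $G(\theta_{D'},p)$ over such $S$ are statistically indistinguishable; the resulting equality of the two probabilities yields \eqref{eq: DP-T2I} with $\epsilon=\delta=0$. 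Your two readings (vacuity, because no admissible pair can differ in the absent point $x$; or $D'=D$, so that $\theta_D$ and $\theta_{D'}$ are equal in distribution) both make the proposition true for purely formal reasons and thereby lose the substantive content the paper intends: that the model's behaviour on $x$-related outputs is invariant under single-element changes \emph{elsewhere} in the training set --- which is exactly the ``relative privacy of non-infringed data'' the proposition is meant to capture, and which motivates the subsequent relaxation to approximate relative privacy when semantic neighbours of $x$ are present. What your route buys is rigour: the paper's key indistinguishability step is an idealized modelling assumption rather than a derivation (changing even an unrelated training point can shift $\theta$ globally, as the paper itself concedes later when it corrects for global parameter shifts in the DPM pipeline). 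What it costs is fidelity: under your reading the statement says nothing about the model at all. If you want to match the paper's content while staying honest about the gap, state the invariance explicitly as a hypothesis (single-element changes involving concepts other than $x$ leave the output law on $U(x)\setminus\{x\}$ unchanged) and then your equality-of-probabilities conclusion goes through verbatim.
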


\begin{proof}
	Consider a data point $x \notin D$ and $x \notin D'$. Let $S$ be any subset of the output space $S_x = U(x) \setminus \{x\}$. As $x$ is not part of either training data, any variation between $D$ and $D'$ does not involve $x$, so the output distributions of $G(\theta_D, p)$ and $G(\theta_{D'}, p)$ over $S$ must be statistically indistinguishable, that is:
	\begin{equation}
		\Pr[G(\theta_D, p) \in S] = \Pr[G(\theta_{D'}, p) \in S].
	\end{equation}
	Thus, the privacy inequality becomes:
	\begin{equation}
		\Pr[G(\theta_D, p) \in S] \leq 1 \cdot \Pr[G(\theta_{D'}, p) \in S] + 0,
	\end{equation}
	which implies that $G$ satisfies $(0, 0)$-conditional differential privacy with respect to $x$.
\end{proof}

The requirement $x \notin D$ and $x \notin D'$ is often too strict in realistic generative models, where semantic neighbors of $x$ (e.g., stylistically or structurally similar samples) may still influence model behavior even if $x$ itself is not present in the training set.
To address this, we introduce a relaxed notion of approximate relative privacy:

\begin{definition}[Approximate Relative Privacy]
	Let $x$ be a data point such that $x \notin D$ and $x \notin D'$, and let $S \subseteq \{G(p_i) \mid p_i \in U(p)\}$.
	We say the model $G$ satisfies $(\epsilon, \delta)$-approximate relative conditional differential privacy with respect to $x$ if:
	\begin{equation}
		\Pr[G(\theta_D, p) \in S] \leq e^{\epsilon} \cdot \Pr[G(\theta_{D'}, p) \in S] + \delta,
	\end{equation}
	for small $\epsilon > 0$ and $\delta \approx 0$, indicating negligible influence from $x$ or its semantic neighborhood.
\end{definition}

Similarly, while achieving strict non-infringement in Def.~\ref{def: noninfringement}, as defined by $\Pr[G(\theta_D, p) \in S] = \Pr[G(\theta_{D'}, p) \in S]$, is challenging in practice, we introduce a relaxed setting to account for minor distributional changes. This leads to the definition of approximate non-infringement, which leverages the concept of differential privacy:

\begin{definition}[Approximate Copyright Non-Infringement]
	 A generative model $G$ satisfies \emph{approximate non-infringement} if the following $(\epsilon, \delta)$-differential privacy holds:
	\begin{equation}
		\Pr[G(\theta_D, p) \in S] \leq e^\epsilon \cdot \Pr[G(\theta_{D'}, p) \in S] + \delta,
	\end{equation}
	where $D$ and $D'$ are any neighboring training datasets, and $\theta_D$, $\theta_{D'}$ denote the model parameters trained on $D$ and $D'$ respectively.
\end{definition}

This definition ensures that the output distribution is invariant with respect to the inclusion or exclusion of $x$, thereby approximately satisfying $(0, 0)$-conditional differential privacy and guaranteeing that $x$ does not influence model behavior.

\section{Copyright Infringement Detection Dataset (CIDD) Preview} \label{appendix: cidd}

\paragraph{Motivation}
While existing datasets provide preliminary resources for copyright infringement detection, they have several limitations due to strong and unrealistic assumptions.
CopyScope~\cite{zhou2023copyscopemodellevelcopyrightinfringement} and CPDM~\cite{ma2024datasetbenchmarkcopyrightinfringement} primarily focus on infringed examples and lack non-infringed counterparts, which makes it difficult to construct reliable image-label pairs or evaluate detection thresholds. 
Other benchmarks~\cite{wang2024evaluatingmitigatingipinfringement} only cover a few iconic character concepts, and fail to generalize to broader cases.

\paragraph{Summaries}
As table \ref{table: dataset} shows, the CIDD dataset contains three classes: human face, architecture, and arts painting. Each includes 3–6 images with infringement labels. The example images are shown in Fig.~\ref{fig: CIDD}.

\begin{figure*}[]
	\centering
	\includegraphics[width=0.95\textwidth]{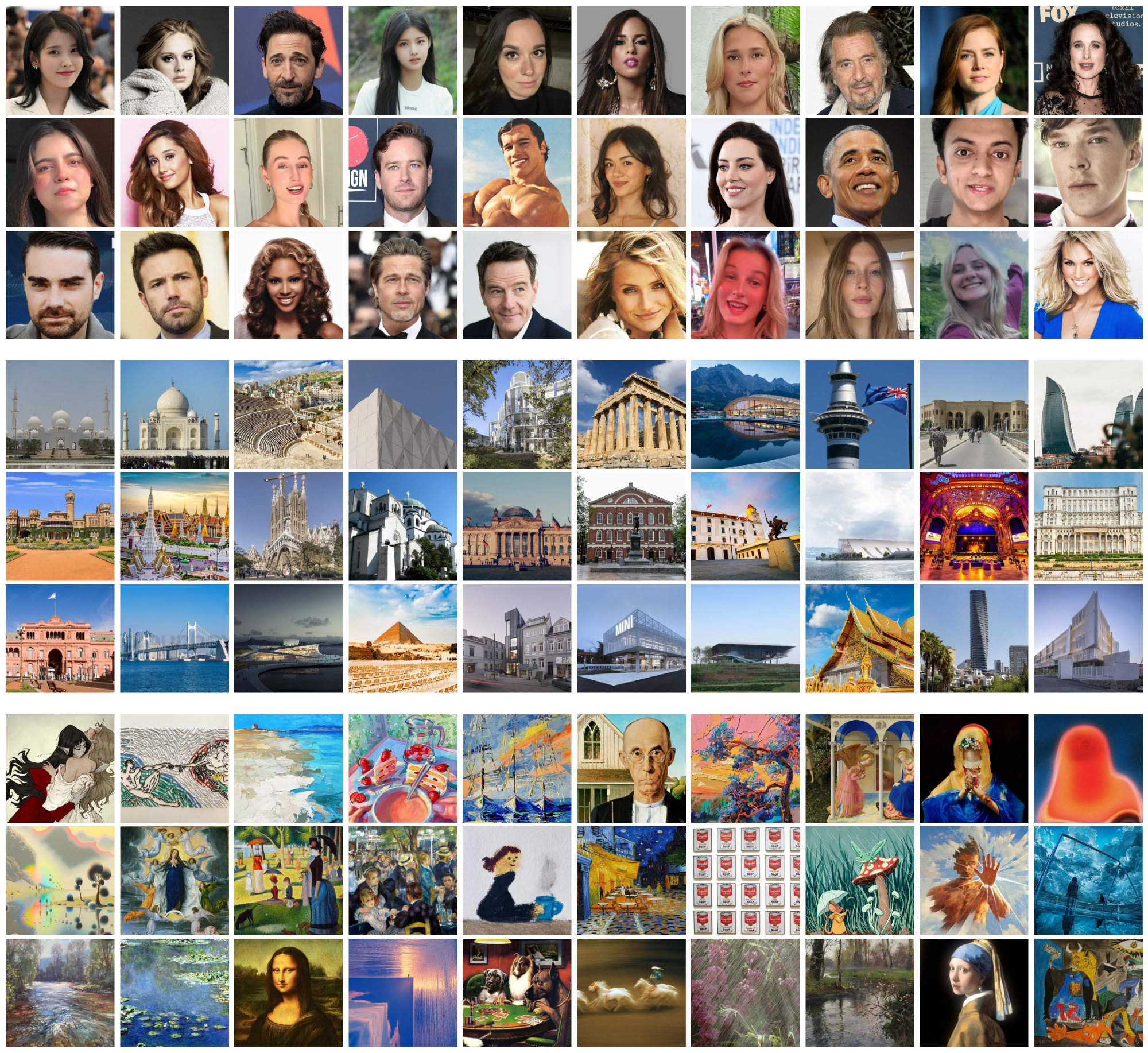} 
	\caption{Examples of CIDD Dataset. It illustrates all three categories—Human Face, Architecture, and Arts Painting—with both infringed and non-infringed samples included.}
	\label{fig: CIDD}
\end{figure*}

\begin{table*}[]
	\setlength{\tabcolsep}{15pt}
	\centering
	
	\begin{tabular}{cccc}
		\Xhline{1.2pt}
		\textbf{Class} & \textbf{Infringed (C/I)} & \textbf{Non-Infringed (C/I)} & \textbf{Total (C/I)} \\ 
		\midrule 
		Human Face          & 91 / 546 & 100 / 600 & 191 / 1146 \\
		Architecture        & 72 / 411 & 67 / 381 & 139 / 792 \\
		Arts Painting       & 43 / 164 & 56 / 295 & 99 / 459 \\
		\midrule
		Total               & 206 / 1121 & 223 / 1276 & 429 / 2397 \\
		\Xhline{1.2pt}& 
	\end{tabular}
	
	\caption{CIDD Dataset Summaries. It contains three classes, corresponding to the high-level categories in table~\ref{table: Hierarchical Categories}. Each cell reports the number of concepts / images (C/I).}
	\label{table: dataset}
\end{table*} 

\paragraph{Intra-Class Similarity}
As shown in the table~\ref{table: clip-sim}, Human face naturally has the highest similarity and is fine-grained, making semantic distinction subtler and narrowing the gap in conditional sensitivity. 

\begin{table}[]
	\setlength{\tabcolsep}{15pt}
	\centering
	
	\begin{tabular}{cc}
		\Xhline{1.2pt}
		\textbf{Class} & \textbf{CLIP Similarity}\\ 
		\midrule 
		Human Face          & 0.85 \\
		Architecture        & 0.79 \\
		Arts Painting       & 0.70 \\
		\Xhline{1.2pt}
	\end{tabular}
	
	\caption{CLIP similarity across three categories in CIDD dataset.}
	\label{table: clip-sim}
\end{table} 

\paragraph{Sources of Images in CIDD Dataset}
Most \emph{concepts} of infringed samples are collected from the widely-used dataset, known to be included in the training dataset of models under evaluation. Using the core concepts extracted from each target image as a keyword, we then collect semantically aligned neighbourhood images from publicly available sources on the Internet. In contrast, non-infringed samples are carefully curated from recent uploads on social media or other platforms to ensure that they do not appear in any known public pretraining corpus. 

\paragraph{Validation of Infringed and Non-Infringed Ground Truth} 
All images are validated for their existence by:

(1) CLIP-based retrieval within the relevant pretraining datasets;

(2) direct text-to-image model generation with the corresponding target concept prompts.

In addition, non-infringed samples are validated on the website \url{https://haveibeentrained.com/} to ensure their absence from any known training corpus.

\paragraph{Takedown Protocol}
To prompt responsible AI development, we implement a takedown mechanism for data revocation. The released dataset includes a transparent reporting channel through which original content creators may request the removal of their own content. Upon verified ownership, the corresponding image files will be permanently removed from the public version of CIDD, and all subsequent versions will update the dataset accordingly.

\section{Details of Hierarchical Categories in Copyright Infringement}

We propose five hierarchical categories of content resemblance that consolidate our framework for detecting copyright infringement in generative models. Each level reflects a different degree of abstraction, from superficial visual similarity to deep conceptual overlap, which reflects increasing levels of abstraction and creative expression:

\paragraph{[Level 1] Technics} 
This level involves low-level visual feature similarities produced by technical artifacts, including pixel-wise similarity, shared textures, shading gradients, image resolution, and filter effects (e.g., HDR). 
For example, two images may share identical color histograms or noise patterns due to the use of the same generator architecture or prompt embedding. Alone, such features rarely suffice for legal claims, but they may suggest direct copying of training artifacts.

\paragraph{[Level 2] Content} 
This category captures the reproduction of identifiable objects, symbols, landmarks, or characters. Examples include copying a well-known logo, a specific architectural monument (e.g., the Eiffel Tower), or a recognizable celebrity's likeness. 
For example, a model may generate an image of Mickey Mouse. These elements are often trademarked or copyrighted, and reproduction may constitute direct infringement.

\paragraph{[Level 3-1] Structure} 
It refers to the compositional arrangements of different elements, especially when such structures are highly characteristic. For example, how subjects are positioned, their gestures, or the perspective used. 
In the architecture domain, structural resemblance often involves the spatial layout, silhouette, and viewpoint framing of a building. Two images may differ in surface texture or lighting but still share a distinctive architectural structure—such as the tiered arrangement of the Colosseum or the spire alignment of the Eiffel Tower. 
These structural blueprints can serve as visual signatures, and if they convey a unique expression, their reproduction may be deemed infringing even when pixel content differs.

\paragraph{[Level 3-2] Style} 
It includes the imitation of distinctive artistic elements, often aesthetic features that are characteristic of a particular artist or artistic movement. 
This includes brushstroke techniques, color palette preferences, texture stylization, or composition rhythm. 
For example, the model may generate a painting in the style of Van Gogh’s “Starry Night” or mimicking Monet’s impressionistic strokes. 
While style itself is not always protected, the imitation of an artist’s unique style often comes from learning relevant unauthorized images, such as the feature of Studio Ghibli.

\paragraph{[Level 4] Semantics} 
The most abstract level, semantics, refers to high-level meaning and conceptual alignment—narratives, emotional tone, message conveyed, or symbolic associations. 
Typical cases such as reproducing a plotline from a protected film, or conveying the same theme as a copyrighted illustration series. 
While semantic similarity is harder to prove, it is relevant in contexts involving derivative works or adaptation rights.

\section{More Experiments} \label{appendix: experiments}

\subsection{Evaluation Metrics} \label{appendix: Evaluation Metrics}

Let $f_i \in [0,1]$ be the confidence score for sample $i \in \{1,2,\cdots,N\}$, and $y_i \in \{0,1\}$ be the ground truth label (1 for infringing, 0 for non-infringing). 

\paragraph{Receiver Operating Characteristic (ROC) and Area Under the Curve (AUC)}
Although ROC is originally defined for binary classifiers, here we borrow it by treating our DPM confidence score as the classifier output and the dataset’s infringement labels (infringed vs. non-infringed) as the binary ground truth. 

The ROC curve illustrates the performance of a binary classifier by plotting the True Positive Rate (TPR) against the False Positive Rate (FPR) at various threshold values. Given a threshold $\delta \in [0,1]$, the predicted label is $\hat{y}_i^\delta = \mathbb{I}(f_i \geq \delta)$, where $\mathbb{I}(\cdot)$ is the indicator function. TPR and FPR are defined as:
\begin{align}
	\text{TPR}(\delta) &= \frac{\sum_{i=1}^{N} \mathbb{I}(f_i \geq \delta) \cdot y_i}{\sum_{i=1}^{N} y_i}, \\
	\text{FPR}(\delta) &= \frac{\sum_{i=1}^{N} \mathbb{I}(f_i \geq \delta) \cdot (1 - y_i)}{\sum_{i=1}^{N} (1 - y_i)}.
\end{align}
By varying $\delta$ from 0 to 1, we obtain a ROC curve, and the AUC is defined as:
\begin{equation}
	\text{AUC} = \int_{0}^{1} \text{TPR}(t) \, d(\text{FPR}(t)).
\end{equation}
A higher AUC indicates better discriminative ability of the confidence scoring function.

\paragraph{Soft Accuracy (SoftAcc)} 
Soft Accuracy provides a probabilistic generalization of traditional classification accuracy, and directly evaluates the confidence score's alignment with the ground truth by rewarding correct predictions proportionally. Formally, it is defined as:
\begin{equation}
	\text{SoftAcc} = \frac{1}{N} \sum_{i=1}^{N} \left[ y_i \cdot f_i + (1 - y_i) \cdot (1 - f_i) \right],
\end{equation}

The metric achieves its maximum of $1$ when all predictions are perfectly aligned with ground truth (i.e., $f_i = 1$ when $y_i = 1$, and $f_i = 0$ when $y_i = 0$), and decreases as predictions deviate, thus offering a more informative and differentiable alternative, especially when evaluating soft scoring functions.

\subsection{Experimental Setup} \label{appendix: experiment setup}

\paragraph{Framework Implementation and Configuration.} 
We implement our framework mainly using the \texttt{PyTorch} and \texttt{diffusers} libraries. All experimental settings for each model are specified in YAML configuration files located in the \texttt{config/} directory. Each reported result is obtained from a single execution without repeated runs.

Furthermore, due to the high computational cost of generating a large number of orthogonal images for statistical analysis, we adopt an efficient approximation by normalizing the conditional sensitivity within each class. 

\paragraph{Model-Aligned CLIP Embeddings.}
For computing CLIP-based image embeddings as a part of the conditional sensitivity metric, we use the CLIP model that corresponds to the text encoder employed by the target generative model, without retraining a separate encoder from scratch. 
This ensures semantic alignment between the image and text embeddings, as both the generative model and the CLIP encoder are trained on the same underlying dataset. 
In contrast, using a mismatched CLIP model may lead to inconsistent sensitivity scores, particularly for non-infringed samples that do not share the same data distribution.

Specifically, we adopt the pretrained \texttt{ViT-B/32} CLIP model for Stable Diffusion v1.4, and \texttt{ViT-L/14@336px} for Stable Diffusion XL. By leveraging the native encoder used during generation, our approach avoids dependence on external detectors and ensures that inference remains fully self-contained and consistent with the model’s internal representation space.

\paragraph{Computational Resources.}
While most experiments were not very computationally expensive, we utilized a cluster of 8 NVIDIA RTX A6000 GPUs to execute runs in parallel. Besides, we also used 1 NVIDIA H100 GPU, 2 NVIDIA A800 GPUs, and 2 additional NVIDIA RTX A6000 GPUs. Furthermore, a cluster of eight NVIDIA RTX 3090 GPUs was used during the development and debugging of the core framework.

For users with limited GPU memory, we recommend splitting the pipeline into two stages rather than running the entire process in one single terminal session. First, perform fine-tuning in each branch and save the resulting models. Then, reload the fine-tuned models to compute the corresponding confidence scores. This avoids caching all model parameters in GPU memory simultaneously.

\paragraph{Runtime and Memory Reporting.}
To assess the practical deployability of our dual-branch detection framework, we report hardware overhead under realistic constraints. All measurements are conducted on NVIDIA RTX A6000 GPU(s) with 48GB of VRAM. Detailed results are summarized in table~\ref{table: hardware}.

\begin{table*}[]
	\setlength{\tabcolsep}{15pt}
	\centering
	
	\begin{tabular}{c ccc}
		\Xhline{1.2pt}
		\textbf{Model} & \textbf{Number of GPU} & \textbf{Runtime} & \textbf{GPU Memory}\\
		\hline
		SD1.4     & 1GPU   & 346s  & 48425MiB \\
		SDXL-1.0 (LoRA) & 1GPU   & 1029s &  15136MiB \\
		SANA-0.6B (LoRA) & 1GPU   & 875s &  15650MiB \\
		FLUX.1 (LoRA) & 2GPUs  & 1106s &  78454MiB \\
		\Xhline{1.2pt}
	\end{tabular}
	\caption{Runtime and GPU memory usage per concept in four models (without statistics).}
	\label{table: hardware}
\end{table*}

\subsection{Quantative and Qualitative Experiments} \label{appendix: Qualitative Experiments}

\paragraph{Statistics of DPM Confidence Score}
To further evaluate the statistical properties of DPM confidence scores, we conduct a grouped analysis based on the ground truth label (infringed vs. non-infringed). For each group, we compute key descriptive statistics of the final DPM confidence scores, including mean and variance. To ensure robustness against noise or rare overfitting cases, we adopt the IQR-based (Interquartile Range) outlier removal strategy, which filters out any sample lying outside the $\left[ Q_1 - 1.5\,\text{IQR},\; Q_3 + 1.5\,\text{IQR} \right]$ interval. Here, $Q_1$ and $Q_3$ denote the 25th and 75th percentiles of the distribution respectively, and $\text{IQR}=Q_3-Q_1$. This criterion excludes samples with unusually low or high confidence scores while preserving the central mass of the data.

As shown in table~\ref{table: dpm statistics models}, there is a clear and consistent separation between infringed and non-infringed samples across all models. Infringed samples yield extremely high DPM scores with minimal variance, indicating strong model certainty, while non-infringed samples exhibit significantly lower means. This confirms the robustness and discriminative power of DPM.

The relatively higher variance in the non-infringed group may be attributed to two factors: 

(1) Some non-infringed concepts, although not memorized, may closely resemble commonly seen data in the training dataset, leading to relatively high DPM scores; 

(2) Certain non-infringed concepts are inherently more unique or abstract, and exhibit lower sensitivity and DPM scores, which further increases score dispersion.

\begin{table*}[]
	\centering
	\setlength{\tabcolsep}{15pt}
	
	\begin{tabular}{cccc}
		\Xhline{1.2pt}
		\textbf{Model} & \textbf{Ground Truth} & \textbf{Mean} & \textbf{Variance} \\
		\Xhline{1.2pt}
		\multirow{2}{*}{SD1.4} 
		& Non-Infringed & 0.5516 & 0.1759 \\
		& Infringed     & 0.9907 & 0.0005 \\
		\midrule
		\multirow{2}{*}{SDXL-1.0} 
		& Non-Infringed & 0.4538 & 0.2105 \\
		& Infringed     & 0.9913 & 0.0003 \\
		\midrule
		\multirow{2}{*}{SANA-0.6B}
		& Non-Infringed & 0.4887 & 0.1756 \\
		& Infringed     & 0.9782 & 0.0021 \\
		\midrule
		\multirow{2}{*}{FLUX.1}
		& Non-Infringed & 0.5046 & 0.1726 \\
		& Infringed     & 0.9913 & 0.0003 \\
		\Xhline{1.2pt}
	\end{tabular}
	\caption{Statistics of DPM confidence scores after outlier removal (IQR). Each model shows a clear separation in confidence scores between infringed and non-infringed samples, validating the discriminative power of DPM.}
	\label{table: dpm statistics models}
\end{table*}

\paragraph{ROC Curves}
Figure~\ref{fig: roc-all} presents the ROC curves across four representative models on the entire CIDD dataset, and Figure~\ref{fig: roc} provides a detailed breakdown for SD1.4, showing per-category ROC curves under varying timesteps and branch configurations. 

\begin{figure*}[]
	\centering
	\includegraphics[width=0.85\textwidth, trim=15 5 10 5, clip]{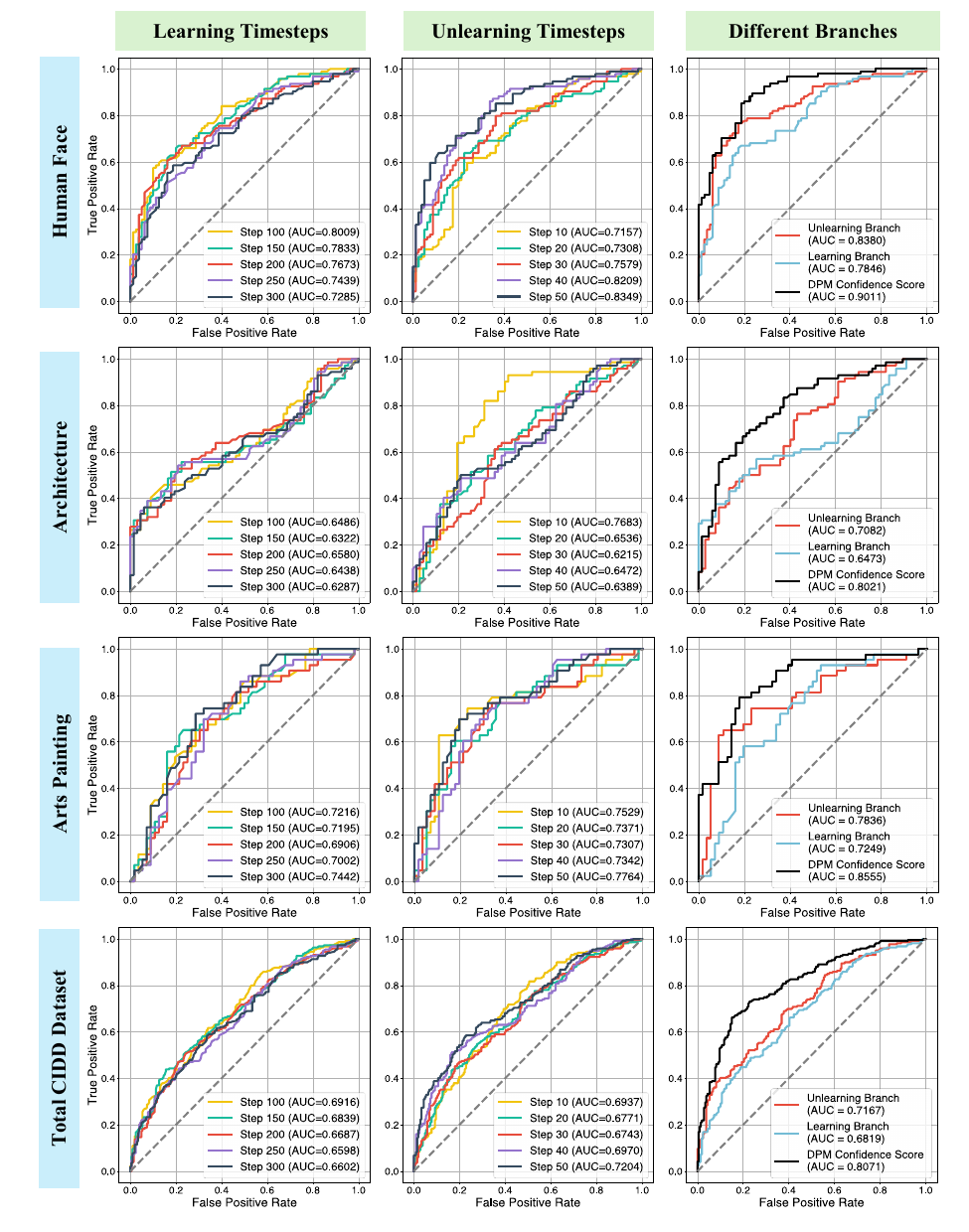} 
	\caption{ROC curves across timesteps and branches in SD1.4. Each row corresponds to a category: Human Face, Architecture, Arts Painting, and Total CIDD Dataset. \textit{Left}: learning branch across sampling steps. \textit{Middle}: unlearning branch across early steps. \textit{Right}: comparison among learning, unlearning, and DPM confidence score. It shows that DPM score (black on the right column) achieves the highest AUC in all categories.
	}
	\label{fig: roc}
\end{figure*}

Also, in Fig.~\ref{fig: roc-all}, we observe that the SANA-0.6B model underperforms in the learning branch, showing weaker separation between infringed and non-infringed subsets compared to other models. Specifically, Fig.~\ref{fig: sens models} shows that the sensitivity metric in the learning branch increases slowly, and the distributions of infringed and non-infringed subsets remain closely overlapping. We attribute this behavior to several key factors: 

(1) limited parameter capacity of the 0.6B-sized SANA model restricts its ability to memorize complex concepts; 

(2) the use of parameter-efficient tuning methods such as LoRA may further limit the expressive adaptation needed to internalize new concepts during fine-tuning. 

These factors jointly lead to reduced sensitivity shifts, weakening the contrastive signal upon which DPM relies.

Despite these limitations, in all settings, the proposed DPM confidence score (black curve) consistently outperforms individual branches in terms of AUC, demonstrating its superior capability in distinguishing infringed from non-infringed samples. Notably, the unlearning branch alone often performs better than the learning branch, yet the dual-branch DPM score further enhances detection robustness and generality across categories and timesteps.

\paragraph{Efficiency-Accuracy Tradeoff in DPM}
For efficiency and scalability, our main experiments omit the statistical mapping step—specifically, we do not generate orthogonal prompts for aligning conditional sensitivity scores. Instead, we directly compute raw $\Delta \text{CS}$ values without mapping them to the ideal conditional sensitivity $\hat{\Delta \text{CS}}$. While this simplification leads to minor deviations in the merged total scores across categories, it significantly reduces computational overhead and facilitates scalable evaluation. 

Despite this approximation, our framework still consistently outperforms single-branch baselines across all models, as shown in Fig.~\ref{fig: roc-all}, indicating that statistical correction is not essential for strong performance in practice. However, this design choice allows for optional statistical correction in high-stakes applications where greater theoretical guarantees are required (see \textit{Statistics Analysis} in Appendix~\ref{appendix: ablation}).

\subsection{Ablation Study} \label{appendix: ablation}

\paragraph{Conditional Sensitivity Metric}
We evaluate the effectiveness of the proposed conditional sensitivity metric in distinguishing infringed from non-infringed samples. 

\begin{figure*}[]
	\centering
	\includegraphics[width=0.83\textwidth, trim=0 10 10 5, clip]{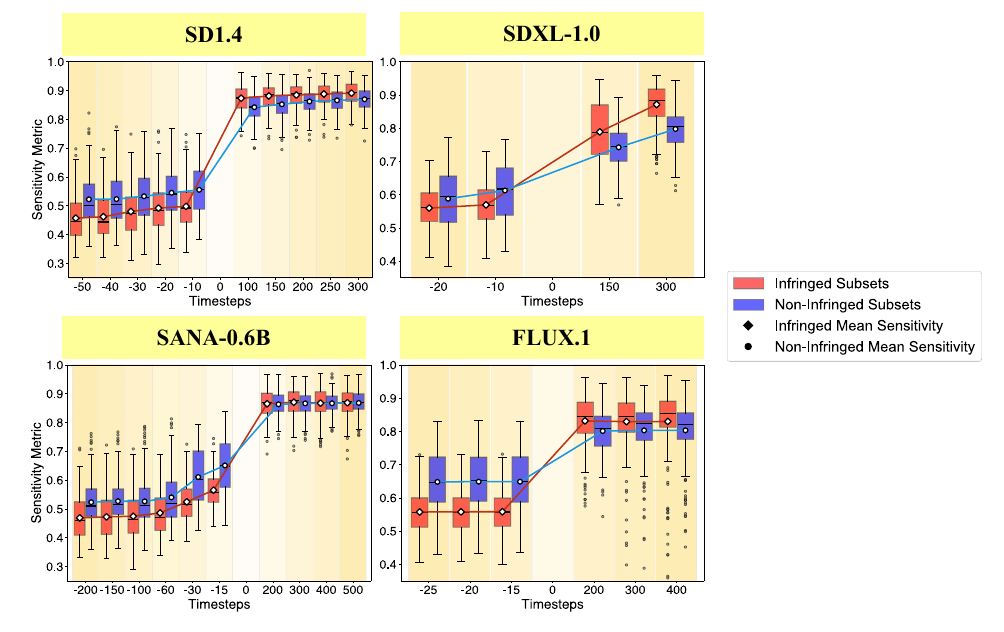} 
	\caption{Conditional sensitivity metric across four representative models.
		Each subplot summarizes the distribution of sensitivity scores across samples. The box indicates the interquartile range, the center line denotes the median, and diamond/circle markers highlight the mean values for infringed and non-infringed subsets respectively.}
	\label{fig: sens models}
\end{figure*}

\begin{figure*}[]
	\centering
	\includegraphics[width=1\textwidth]{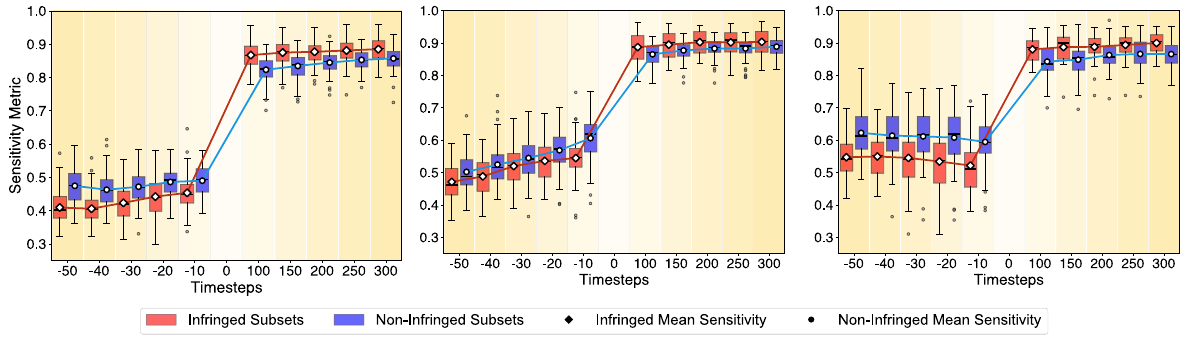} 
	\caption{Conditional sensitivity metric across three CIDD categories in SD1.4: 
		(left) Human Face, (middle) Architecture, and (right) Arts Painting. It shows that infringed samples are more sensitive to the model outputs behavior.}
	\label{fig: sens sd14 all}
\end{figure*}

Figures~\ref{fig: sens models} and \ref{fig: sens sd14 all} illustrate the evolution of sensitivity scores across different model architectures and CIDD categories respectively. It shows that in most cases, the interquartile ranges of the two subsets show minimal overlap, reinforcing the discriminative power of the metric.
As fine-tuning progresses, a clear and consistent gap emerges between the two subsets, with infringed samples exhibiting substantially higher sensitivity scores. This observation confirms that DPM is capable of reliably detecting memorized content and demonstrates strong generalizability across both models and semantic domains.

\paragraph{Branches Strategies}
We evaluate the individual contributions of the learning and unlearning branches across multiple timesteps. As shown in table~\ref{table: conf score}, combining both branches into the unified DPM confidence score consistently yields superior detection performance compared to using either branch alone. 
This result highlights the necessity of dual-branch modeling: the unlearning branch captures the extent to which a model can forget a concept (similar to a model $G_{D^-}$ trained on $D^-$), while the learning branch quantifies how easily that concept can be re-memorized (similar to a model $G_{D^+}$ trained on $D^+$). 

Notably, the unlearning branch alone often performs better than the learning branch. This is because the unlearning process typically induces more direct and measurable changes in the model's output behavior. By actively suppressing the target concept, the model tends to generate outputs that deviate significantly from the original image, especially for infringed samples. This clear behavioral divergence leads to stronger sensitivity signals, making the unlearning branch more discriminative. 

In contrast, the learning branch may suffer from limited capacity for further memorization, especially when the target concept has already been partially internalized by the pretrained model. In such cases, the learning process offers only marginal behavioral change—i.e., a limited room for sensitivity to increase. This issue is further exacerbated in small-scale or already well-trained models, where convergence tends to be slow and additional fine-tuning yields diminishing returns.

Nonetheless, the combined dual-branch DPM score captures behavioral deviations in both directions, thus offering more robust and generalizable detection performance across categories and timesteps.

\begin{table*}[]	
	\setlength{\tabcolsep}{6pt}
	\centering
	
	\begin{tabular}{c c cc cc cc cc}
		\Xhline{1.2pt}
		\multirow{2}{*}{\textbf{Branch}}
		& \multirow{2}{*}{\textbf{Timesteps}}
		& \multicolumn{2}{c}{\textbf{Human Face}}
		& \multicolumn{2}{c}{\textbf{Architecture}}
		& \multicolumn{2}{c}{\textbf{Arts Painting}}
		& \multicolumn{2}{c}{\textbf{Total Dataset}} \\
		
		&& AUC $\uparrow$& SoftAcc $\uparrow$
		& AUC $\uparrow$& SoftAcc $\uparrow$
		& AUC $\uparrow$& SoftAcc $\uparrow$ 
		& AUC $\uparrow$& SoftAcc $\uparrow$ \\
		\Xhline{1.2pt}
		
		\multirow{7}{*}{\makecell[c]{Unlearning\\Branch}}
		&-10                  & 0.7157 & 0.5545 & 0.7683 & 0.6148 & 0.7529 & 0.8191 & 0.6937 & 0.5978 \\
		&-20                  & 0.7308 & 0.5557 & 0.6536 & 0.5640 & 0.7371 & 0.7968 & 0.6771 & 0.5747 \\
		&-30                  & 0.7579 & 0.5837 & 0.6215 & 0.5320 & 0.7307 & 0.7915 & 0.6743 & 0.5713 \\
		&-40                  & 0.8209 & 0.5983 & 0.6472 & 0.5371 & 0.7342 & 0.7833 & 0.6970 & 0.5661 \\
		&-50                  & 0.8349 & 0.6337 & 0.6389 & 0.5340 & 0.7764 & 0.7625 & 0.7204 & 0.5656 \\
		\cmidrule(lr){2-10}
		& Total               & 0.8380 & 0.7097 & 0.7082 & 0.6184 & 0.7836 & 0.7048 & 0.7167 & 0.6196 \\
		\Xhline{1.2pt}
		
		\multirow{8}{*}{\makecell[c]{Learning\\Branch}}
		& 100                  & 0.8009 & 0.6723 & 0.6486 & 0.5699 & 0.7216 & 0.5871 & 0.6916 & 0.6059\\
		& 150                  & 0.7833 & 0.6703 & 0.6322 & 0.5446 & 0.7195 & 0.5785 & 0.6839 & 0.5889\\
		& 200                  & 0.7673 & 0.6533 & 0.6580 & 0.5514 & 0.6906 & 0.5812 & 0.6687 & 0.5883\\
		& 250                  & 0.7439 & 0.6407 & 0.6438 & 0.5489 & 0.7002 & 0.5743 & 0.6598 & 0.5834\\
		& 300                  & 0.7285 & 0.6123 & 0.6287 & 0.5615 & 0.7442 & 0.6267 & 0.6602 & 0.5729\\
		\cmidrule(lr){2-10}
		& Total                & 0.7846 & 0.6727 & 0.6473 & 0.5560 & 0.7249 & 0.5841 & 0.6819 & 0.5948\\
		\Xhline{1.2pt}
		
		\multicolumn{2}{c}{DPM Confidence Score} 
		& 0.9011 & 0.8058 & 0.8021 & 0.7106 & 0.8555 & 0.7604 & 0.8071 & 0.6726\\
		\Xhline{1.2pt}
	\end{tabular}
	
	\caption{Evaluation of DPM Confidence Scores under different timesteps and branch configurations in SD1.4. We report per-category performance for the learning branch, unlearning branch, and the combined DPM confidence score. The results demonstrate that DPM consistently outperforms individual branches and that averaging over multiple timesteps improves stability and reliability in detection.}
	\label{table: conf score}
\end{table*}

\paragraph{Training Timesteps}
As shown in table~\ref{table: conf score}, DPM confidence scores for both the learning and unlearning branches fluctuate slightly across different fine-tuning timesteps. This variability indicates that relying on a single timestep may lead to unstable detection performance. 
To mitigate this issue, we aggregate the sensitivity scores by averaging across multiple representative timesteps within each branch. The final DPM confidence score is then derived by merging these two aggregated branches, resulting in significantly improved and more stable performance across all CIDD categories.

\paragraph{Statistics Analysis}

Due to the large computational cost of generating orthogonal prompts, we randomly selected about 60 samples from the human face category, and sampled from more than 200 orthogonal prompts. 

As shown in table~\ref{table: stat}, incorporating statistical normalization yields a slight but consistent improvement in both AUC and Soft Accuracy. This suggests that rectifying global parameter shifts enhances the robustness of the conditional sensitivity metric. While the gain in better performance is modest, the approach provides stronger statistical guarantees and greater interpretability.

\begin{table*}[]	
	\setlength{\tabcolsep}{15pt}
	\centering
	
	\begin{tabular}{ccc}
		\Xhline{1.2pt}
		\textbf{Type} & AUC & SoftAcc \\
		\midrule 
		w/ statistics   & 0.9535 & 0.8589 \\
		w/o statistics  & 0.9410 & 0.8441 \\
		\Xhline{1.2pt}
	\end{tabular}
	
	\caption{Evaluation of DPM Confidence Scores under statistical normalization. Results are reported on a human face category on SD1.4 model.}
	
	\label{table: stat}
\end{table*}

\subsection{Robustness} \label{appendix: robustness}

To further evaluate the robustness of our method under real-world perturbations, we evaluate the DPM framework against two common types of image degradation: 

(1) lossy image quality compression with quality factor $IQ(\text{jpeg})=30\%$;

(2) additive Gaussian noise with standard deviation $\sigma=0.03$. 

As shown in table~\ref{table: robustness}, DPM maintains strong performance under both perturbation settings, with only minor fluctuations in AUC and Soft Accuracy. It is because DPM relies on the detection of deep semantic and stylistic features, rather than superficial pixel changes.

\begin{table*}[]	
	\setlength{\tabcolsep}{15pt}
	\centering
	
	\begin{tabular}{ccccc}
		\Xhline{1.2pt}
		\multirow{2}{*}{\textbf{Class}} 
		& \multicolumn{2}{c}{\textbf{Gaussian Noise}} 
		& \multicolumn{2}{c}{\textbf{IQ Compression}} \\
		& AUC & SoftAcc 
		& AUC & SoftAcc\\
		
		\midrule 
		Human Face    & -0.0557 & -0.0593 & -0.0168 & -0.0101 \\
		Architecture  & +0.0568 & +0.0511 & +0.0993 & +0.0748 \\
		Arts Painting & -0.0403 & -0.0219 & +0.0602 & +0.0514 \\
		\midrule
		Weighted Average & -0.0156 & -0.0148 & +0.0387 & +0.0317 \\
		Merged Total  & -0.0051 & +0.0137 & +0.0595 & +0.0804 \\
		\Xhline{1.2pt}	\end{tabular}
	
	\caption{Evaluation of DPM Confidence Scores under Image Perturbation in SD1.4. The metrics are compared with those in table~\ref{table: quantitative}. Here, positive values indicate improved performance under perturbation, while negative ones reflect a drop in performance. }
	
	\label{table: robustness}
\end{table*} 

Similarly, different templates affect little to model parameters and won't affect our method:

\begin{table*}[]
	\setlength{\tabcolsep}{15pt}
\centering

\begin{tabular}{ccc}
	\Xhline{1.2pt}
	\textbf{Prompt Templates} & AUC & SoftAcc \\
	\midrule 
	a photo of [V] painting (Original) & 0.86 & 0.76 \\
	a photo of sks painting & 0.82 & 0.74 \\
	a painting in the style of V* art & 0.87 & 0.79 \\
	\textbf{randomized prompts} & 0.86 & 0.77 \\
	\Xhline{1.2pt}	\end{tabular}

	\caption{Evaluation of DPM Confidence Scores under different prompt templates in SD1.4, arts painting. }
\end{table*}

\section{Future Work}~\label{appendix: future}

While our current framework focuses on detecting copyright infringement in text-to-image diffusion models, we believe that the core principle of differential privacy in DPM can be generalized to other generative models, such as Large Vision and Language Models (LVLMs) and Large Language Models (LLMs).

Here, we provide a concrete methodology for extending the DPM framework beyond text-to-image diffusion models, and demonstrate how it can be adapted to LVLMs and LLMs through pre-processing and domain-specific sensitivity metrics.

\subsection{DPM framework in LVLMs}
In LVLMs such as MiniGPT, copyright infringement may manifest through multimodal behaviors, including the reproduction of protected image-caption pairs, or the disclosure of memorized content in tasks such as visual question answering (VQA). 
To adapt the DPM framework to this setting, we simulate the inclusion and exclusion of target concepts through dual-branch fine-tuning and measure conditional sensitivity via interpretable multimodal metrics.

\paragraph{Pre-Processing} 
Analogous to the image-driven setting in diffusion models, we begin with a target image and extract its core concept. Then, we retrieve a set of semantically similar neighborhood images and construct a set of \emph{image-question pairs} designed to probe potential memorization. The questions are constructed to explicitly elicit the unique identifier associated with the target concept:

\begin{itemize}
	\item \textbf{Example Questions:}
	\begin{itemize}
		\item \texttt{<Img><ImageHere></Img>} What is the name of the person in this image?
		\item \texttt{<Img><ImageHere></Img>} Who is shown in the picture? Please give their name.
		\item \texttt{<Img><ImageHere></Img>} Can you identify this thing? What is it called?
	\end{itemize}
	
	\item \textbf{Expected Answers:}
	\begin{itemize}
		\item The name of the person in this image is \texttt{TOK}.
		\item The image shows a person whose name is \texttt{TOK}.
		\item Yes. It is called \texttt{TOK}.
	\end{itemize}
\end{itemize}

\paragraph{Conditional Sensitivity Measurement}
To evaluate the model’s reliance on the target concept, we redefine the query function $M(\cdot)$ from Eq.~\ref{eq: CS} to be a multimodal metric, such as the probability of generating the identifier token (e.g., “\texttt{TOK}”) or overall perplexity (especially in unlearning branch) in response to a probe question. This allows us to quantify how strongly the model depends on the presence of the concept in its training data to produce the expected answer. A significant change in token confidence between the two branches indicates potential memorization and copyright infringement.

\subsection{DPM framework in LLMs}
In LLMs, copyright infringement often arises in text-only generation, particularly when the model reproduces copyrighted expressions, phrases, or structured content. 
To extend the DPM framework to this domain, we follow a similar strategy: constructing semantically aligned prompt-response pairs around the target concept, and measuring conditional sensitivity based on behavioral divergence between learning and unlearning branches.

\paragraph{Prompt Construction}
Given a target concept in the form of a protected textual passage—such as a paragraph from a copyrighted article, a block of code, or a descriptive narrative—we construct semantically aligned prompts that encourage the model to reproduce the original content. These prompts are not limited to factual queries, but instead aim to establish a contextual setting that makes the target content likely to appear in the model’s output. 

For example, if the original paragraph is a poetic description or a legal definition, we may prompt the model with an introductory sentence (e.g., “Write a brief summary in the style of [V]: ...”) or a highly similar prefix (e.g., the first few words of the [V] passage). This allows us to probe whether the model reproduces the memorized continuation under minimal or natural prompting conditions. 
In both the learning and unlearning branches, the same prompt is used to elicit the model’s completion, enabling a direct comparison of the output behavior.

\paragraph{Conditional Sensitivity Measurement}
A direct approach to measuring conditional sensitivity is to measure the shift in log-likelihood or perplexity of the generated text under the same prompt between the two branches. 

Alternatively, for more semantically flexible evaluation, we compute the semantic similarity between the model-generated output and the original passage, by using sentence embeddings (e.g., from pretrained models such as Sentence-BERT, or the internal LLM embeddings) and computing the cosine similarity between outputs from the two branches and the original target. A significantly higher similarity score in the learning branch compared to the unlearning branch indicates that the model’s behavior is strongly conditioned on having memorized the original passage.

By combining multiple metrics—token-level log-probability, output perplexity, and semantic similarity—we obtain a robust, interpretable estimate of copyright infringement. 

\endgroup

\end{document}